\documentclass{article}


\usepackage[final]{neurips_2025}




\usepackage[utf8]{inputenc} 
\usepackage[T1]{fontenc}    
\usepackage{hyperref}       
\usepackage{url}            
\usepackage{booktabs}       
\usepackage{amsfonts}       
\usepackage{nicefrac}       
\usepackage{microtype}      
\usepackage{xcolor}         
\usepackage{caption}
\usepackage{float}
\usepackage{hyperref}       
\usepackage{url}            
\usepackage{booktabs}       
\usepackage{amsfonts}       
\usepackage{nicefrac}       
\usepackage{microtype}      
\usepackage{lipsum}		
\usepackage{graphicx}
\usepackage{natbib}
\usepackage{doi}

\usepackage{times}
\usepackage{multirow}
\usepackage[table]{xcolor} 
\usepackage{soul}
\usepackage{bbding}
\usepackage{url}
\usepackage{amsmath}
\usepackage{amsthm}
\usepackage{algorithm}
\usepackage{algorithmic}
\usepackage{newfloat}
\usepackage{bbding}
\usepackage{listings}
\usepackage{tikz}
\usepackage{comment}
\usepackage{amssymb}
\usepackage{color}

\usepackage[utf8]{inputenc}
\usepackage[T1]{fontenc}
\usepackage{xcolor}

\newtheorem{Theorem}{\textbf{Theorem}}
\newtheorem{Proposition}{\textbf{Proposition}}
\newtheorem*{proposition}{\textbf{Proposition}}
\newtheorem*{theorem}{\textbf{Theorem}}

\usepackage{xcolor}
\definecolor{codegreen}{rgb}{0,0.6,0}
\definecolor{codegray}{rgb}{0.5,0.5,0.5}
\definecolor{codepurple}{rgb}{0.58,0,0.82}
\definecolor{backcolour}{rgb}{0.95,0.95,0.92}
\lstset{
    backgroundcolor=\color{backcolour},   
    commentstyle=\color{codegreen},
    keywordstyle=\color{magenta},
    numberstyle=\tiny\color{codegray},
    stringstyle=\color{codepurple},
    basicstyle=\ttfamily\footnotesize,
    breakatwhitespace=false,         
    breaklines=true,                 
    captionpos=b,                    
    keepspaces=true,                 
    numbers=left,                    
    numbersep=5pt,
    showspaces=false,                
    showstringspaces=false,
    showtabs=false,                  
    tabsize=2,
    language=Python
}

\title{Deep Tree Tensor Networks}

%

\author{%
  Chang Nie \\
  Nanjing University of Science and Technology\\
  Nanjing, China \\
  \texttt{changnie@njust.edu.cn} \\
}

\PassOptionsToPackage{numbers, compress}{natbib}
\usepackage{amssymb}
\usepackage{multirow}

\begin{document}

\maketitle

\begin{abstract}
Originating in quantum physics, tensor networks (TNs) have been widely adopted as exponential machines and parametric decomposers for recognition tasks.
Typical TN models, such as Matrix Product States (MPS), have not yet achieved successful application in natural image recognition. When employed, they primarily serve to compress parameters within pre-existing networks, thereby losing their distinctive capability to capture exponential-order feature interactions.
This paper introduces a novel architecture named \textit{\textbf{D}eep \textbf{T}ree \textbf{T}ensor \textbf{N}etwork} (DTTN), which captures $2^L$-order multiplicative interactions across features through multilinear operations, while essentially unfolding into a \emph{tree}-like TN topology with the parameter-sharing property.
DTTN is stacked with multiple antisymmetric interaction modules (AIMs), and this design facilitates efficient implementation.
Furthermore, our theoretical analysis demonstrates the equivalence between quantum-inspired TN models and polynomial/multilinear networks under specific conditions.
We posit that the DTTN could catalyze more interpretable research within this field.
The proposed model is evaluated across multiple benchmarks and domains, demonstrating superior performance compared to both peer methods and state-of-the-art architectures.
Our code is publicly available at \url{https://github.com/NieCha/deep_tree_tensor_network}.
\end{abstract}
\section{Introduction}
\label{sec:intro}

\indent \textit{``Simplicity is the ultimate sophistication.''} --- Leonardo da Vinci

The wavefunction of a quantum many-body system typically resides in an extremely high-dimensional Hilbert space, with its complexity increasing exponentially as the particle count grows~\cite{jiang2008accurate,zhao2010renormalization}.
For example, consider a system consisting of $N$ spin-$\frac{1}{2}$ particles; the dimensionality of the corresponding Hilbert space would be $2^N$.
Tensor networks (TNs) offer powerful numerical techniques for tackling the ``\textit{Curse of Dimensionality}''~\cite{cichocki2016tensor}.
By leveraging the local entanglement properties of quantum states, TNs represent complex wavefunctions into multilinear representations of multiple low-dimensional cores, thereby significantly reducing computational and storage requirements~\cite{ref1,ref12}\footnote{In tensor network theory, states that satisfy the area law for entanglement entropy can be efficiently approximated using TNs with finite bond dimensions.}.
This class of methods allows for an accurate representation of quantum states while mitigating the exponential growth in complexity, making it feasible to simulate large-scale quantum systems~\cite{jaschke2018one,ref12}.

Recently, TN-based interpretable and quantum-inspired white-box machine learning has attracted the attention of researchers. It holds the potential to generate novel schemes that can run on quantum hardware~\cite{huggins2019towards,ran2023tensor}.
Typical TN models, including Matrix Product States~\cite{cirac2021matrix} (MPS\footnote{The MPS is also referred to as Tensor Train~\cite{ref22}, or Tensor Ring~\cite{ref14} with the periodic condition in classical machine learning.}), Tree Tensor Network (TTN)~\cite{cheng2019tree}, and Multi-Scale Entanglement Renormalization Ansatz (MERA)~\cite{reyes2020multi} are skillfully applied in image classification.
Routine practice is to map each pixel or local patch of the image to a $d$-dimensional vector by a local map function $\phi(\cdot)$, and then use the \textit{tensor product} to obtain a joint feature map $\Phi(\boldsymbol{x})$ of $d^N$ dimensionality (see Fig.~\ref{img1} left).
This process can be expressed as follows:
\begin{equation}\begin{split}
f(\boldsymbol{x})=\text{arg}\ \underset{m}{\text{max}}\ \ \left\langle \boldsymbol{W}^m, \Phi(\boldsymbol{x}) \right\rangle.
\label{eq1}
\end{split}\end{equation}
Here, $\boldsymbol{W}^m$ represents a $(N+1)$-th order tensor with an output index $m$; $f(\cdot):\mathbb{R}^{w\times h\times c}\rightarrow \mathbb{R}$ denotes a multilinear function.
In principle, mapping samples into exponential dimensional space to achieve linear separability instead of adopting activation functions is the essence of TNs~\cite{selvan2020tensor,ran2023tensor,patra2024efficient}. However, existing methods are limited to simple tasks, e.g., MNIST and Fashion MNIST~\cite{cheng2019tree,ran2023tensor}, and we reveal that this is mainly due to 1) \textit{low computational efficiency} and 2) \textit{lack of feature self-interaction capability} (Section 3.3 for more details). Consequently, our goal is to address these challenges by applying TNs to complex benchmarks such as ImageNet-1K, thereby bridging the gap between TNs and modern architectures.

\begin{figure*}[t]
\centering
\includegraphics[width=.9\textwidth]{./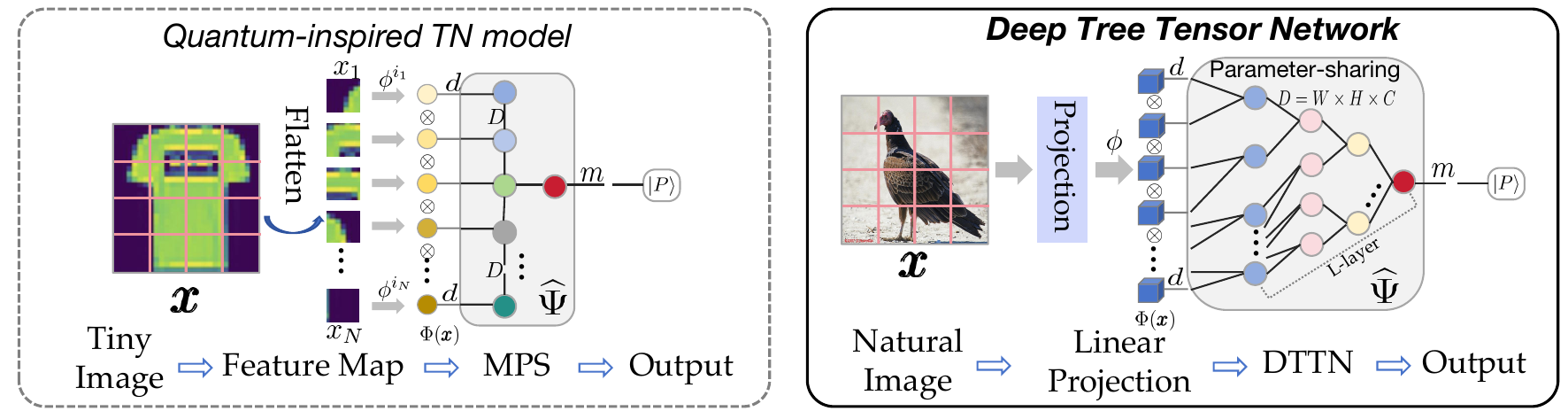} 
\caption{Schematic diagram of the quantum-inspired MPS model and DTTN towards image recognition task. The former is applied for tiny inputs and setting a small local mapping dimension $d=2$ and bond dimension $D\leq 64$ in general\cite{ran2023tensor}. DTTN handles complex inputs while retaining spatial locality in linear projection, and its parameter-sharing nature allows for maintaining a high bond dimension.}
\label{img1}
\end{figure*}

Alternatively, TNs are popularly employed to parameterize existing network models for acceleration. For example, the variational parameters of a network can be directly decomposed into TN formats, including convolutional kernels~\cite{ref15,ref26}, fully-connected weights~\cite{li2023hybrid,nie2022stn,ref29}, and attention blocks~\cite{liang2024tensor}, to name a few.
During inference, one can choose to retain or merge the TN structure as required.
However, such techniques are devoid of probabilistic interpretation and feature multiplicative interaction.
Notably, there exist intriguing and previously unexplored similarities between advanced deep learning architectures and TNs, such as MLP-Mixer~\cite{tolstikhin2021mlp}, polynomial networks, and multilinear networks~\cite{chrysos2021deep,chrysos2023regularization,cheng2024multilinear}, which have demonstrated strong performance on complex visual tasks.
As shown in Fig.~\ref{img2}, we visualize different architectures from the TN perspective for comparison. We note that contemporary architectures exhibit several key distinctions from existing TNs, including the incorporation of \textit{nonlinear activations} and \textit{instance normalization}.
Consequently, we can improve TNs by drawing insights from advanced architectures through rigorous equivalence analysis, thereby overcoming their limitations on complex benchmarks.

Concretely, we introduce a novel class of TN architectures, named \textit{Deep Tree Tensor Network} (DTTN)~\cite{nie2025deep}, which uses multilinear operations to capture exponential-order interactions among features and predict targets without relying on activation functions or attention mechanisms.
DTTN is constructed by sequentially stacking multiple antisymmetric interaction modules (AIMs) and inherently unfolds into a \textit{tree}-like structure to reach on-par performance compared to advanced architectures, with better interpretability and understanding.
Overall, the contributions of this paper are summarized as follows:

\begin{itemize}
\item We introduce DTTN, a simple yet effective architecture constructed by sequentially stacking AIMs.
DTTN captures feature $2^L$ multiplicative interactions without activation functions and achieves state-of-the-art performance compared to other polynomial and multilinear networks with faster convergence (see Fig.~\ref{img2} (e)).

\item We provide a theoretical analysis of the equivalence between DTTN and other architectures under specific conditions. 
Without instance normalization, DTTN essentially reduces to a tree-topology TN, thereby overcoming the limitations of TN-based Born machines that excel mainly in simple tasks.

\item We conduct a comprehensive evaluation of the proposed model’s effectiveness and broader applicability across diverse benchmarks and domains, including visual recognition, recommender systems, and physics-informed neural networks.  
Our results show that DTTN achieves performance comparable to, and in certain cases even outperforms, carefully designed state-of-the-art architectures.

\end{itemize}

\noindent\textbf{Notations}.
Throughout this paper, we use $\boldsymbol{x}\in\mathbb{R}^{I_1}$, $\boldsymbol{X}\in\mathbb{R}^{I_1\times I_2}$, $\boldsymbol{\mathcal{X}}\in\mathbb{R}^{I_1\times\dots \times I_N}$ to denote first-order vectors, second-order matrices, and $N$-th order tensors, respectively.
The blackboard letters are employed to represent a set of objects, e.g., $\mathbb{R}$ and $\mathbb{Z}$ denote real numbers and integers.
In addition, $*$, $\odot$, and $\otimes$ denote the \textit{Hadamard} \textit{product}, \textit{Khatri-Rao} \textit{product}, and \textit{tensor} \textit{product}, respectively.
For brevity, $|\mathbb{K}|$ denotes the cardinality of a set $\mathbb{K}$, and $\mathbb{K}_N$ denotes the positive integers set $\{1,2,\dots,N\}$.
Moreover, for given tensors $\boldsymbol{\mathcal{A}}\in\mathbb{R}^{I_1\times I_2 \times I_3}$ and $\boldsymbol{\mathcal{B}}\in\mathbb{R}^{I_4\times I_5 \times I_6 \times I_7}$, with $I_2=I_4$ and $I_3=I_5$. The tensor contraction is executed by summing along the shared modes of $\boldsymbol{\mathcal{A}}$, $\boldsymbol{\mathcal{B}}$ to yield a new tensor $\boldsymbol{\mathcal{C}}=\boldsymbol{\mathcal{A}}\times_{2,3}^{1,2}\boldsymbol{\mathcal{B}} \in\mathbb{R}^{I_1\times I_6 \times I_7}$.
The entry-wise calculation can be expressed as $\boldsymbol{\mathcal{C}}_{(i_1,i_6,i_7)}=\sum_{i_2=1}^{I_2}\sum_{i_3=1}^{I_3}\boldsymbol{\mathcal{A}}_{(i_1,i_2,i_3)}\boldsymbol{\mathcal{B}}_{(i_2,i_3,i_6, i_7)}$. Refer to \cite{ref21} for additional definitions.

\begin{figure*}[t]
\centering
\includegraphics[width=.99\textwidth]{./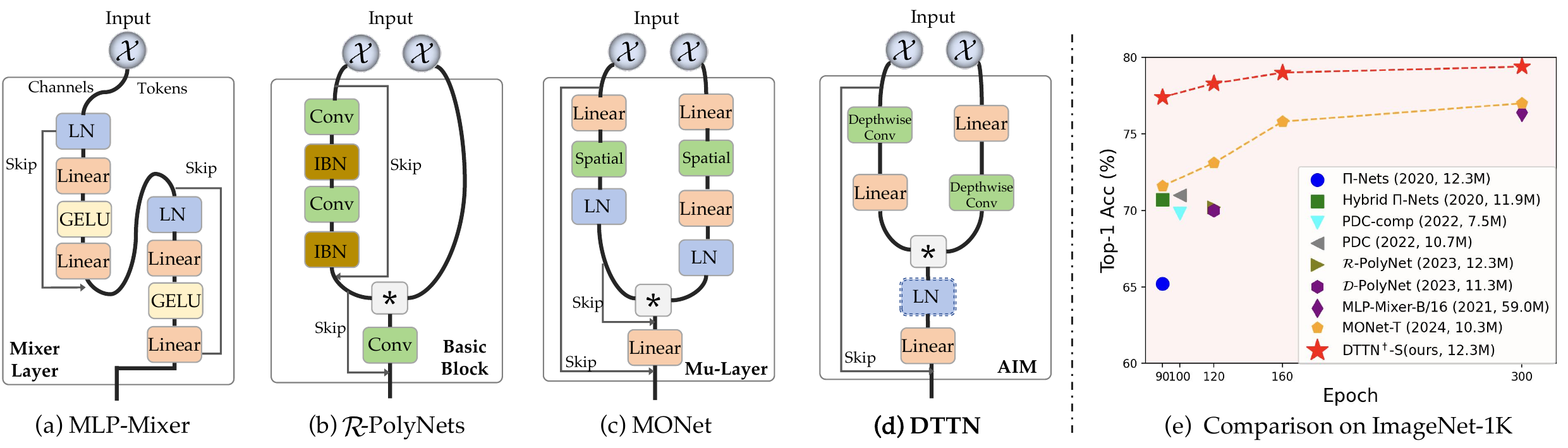} 
\caption{(a-d) Illustration of Core Blocks for Different Architectures. The MLP-Mixer utilizes GELU activation and other networks via the Hadamard product ``$*$'' to enable the network to learn complex representations. We emphasize that instance batch normalization (IBN) and layer normalization (LN~\cite{xu2019understanding}) preceded before Hadamard product operations disrupt the polynomial unfolding nature of $\mathcal{R}$-PolyNets~\cite{chrysos2023regularization} and MONet~\cite{cheng2024multilinear}. In contrast, the succinctly designed AIM circumvents this issue. The optional LN inside AIM not only enhances performance but also facilitates faster convergence. (e) Comparison of Different Networks on ImageNet-1k. When comparing various networks trained on ImageNet over different epochs, DTTN stands out by achieving state-of-the-art performance compared to other multilinear networks, significantly outperforming them.
}
\label{img2}
\end{figure*}

\section{Related Work}
\label{sec:rel}
\subsection{Quantum-Inspired TNs}
Quantum-inspired tensor networks (TNs) enhance the performance of classical algorithms by mimicking quantum computational characteristics~\cite{huggins2019towards}. These methods map inputs into a Hilbert space with exponential dimensionality, achieving linear separability through local mappings and tensor products, while employing multiple low-order cores to parameterize coefficients, significantly reducing computational and storage complexity~\cite{selvan2020tensor,stoudenmire2016supervised}. The avoidance of activation functions, alongside the theoretical underpinnings rooted in many-body physics, contributes to the interpretability of TNs~\cite{ran2023tensor}.
Recently, numerous studies have successfully applied TNs to tasks such as image classification~\cite{stoudenmire2016supervised,ref8}, generation~\cite{cheng2019tree}, and segmentation~\cite{selvan2021patch}. These studies effectively integrate established neural network techniques like residual connections~\cite{meng2023residual}, multiscale structures~\cite{liu2019machine}, and normalization~\cite{selvan2020tensor} into TN frameworks.
However, current TNs are predominantly suited for simpler tasks and face limitations in terms of computational efficiency and expressive power.

\subsection{Advanced Modern Networks}
In the contemporary landscape of deep learning, the design of network architectures has grown increasingly sophisticated and varied, each architecture presenting distinct advantages. Advanced models such as Convolutional Neural Networks (CNNs) renowned for efficient feature extraction~\cite{ref31, hou2024conv2former}, Transformers distinguished by their powerful contextual understanding capabilities~\cite{vaswani2017attention}, MLP-based architectures celebrated for their simple yet effective designs~\cite{tolstikhin2021mlp}, and Mamba noted for its linear complexity~\cite{gu2023mamba} have become pivotal across a wide array of applications.
These networks leverage nonlinearities, while beneficial for model expressiveness, but limit applicability in domains such as security and encryption. Notably, Leveled Fully Homomorphic Encryption schemes can only support addition and multiplication operations~\cite{brakerski2014leveled, cheng2024multilinear}.

\subsection{Polynomial and Multilinear Networks}
Polynomial and multilinear networks employ addition and multiplication operations to construct intricate network representations~\cite{chrysos2021deep,chrysos2023regularization,cheng2024multilinear}. 
Specifically, the pioneering Polynomial Network (PN) \cite{chrysos2021deep} constructs higher-order polynomial expansions of the input features in a modular fashion while supporting end-to-end training, achieving notable success in both image recognition and generation tasks. In their follow-up work, Chrysos et al. \cite{chrysos2023regularization} introduce regularization strategies—such as data augmentation, instance normalization, and higher-order feature interactions—to further enhance model performance.
Cheng et al.~\cite{cheng2024multilinear} drew inspiration from modern architectural designs to propose MONet, aiming to narrow the gap between multilinear networks and modern architectures.
It is worth noting that both polynomial and multilinear networks can capture exponential-order feature interactions. However, a key distinction lies in their structural unfoldability: polynomial networks maintain an unfoldable structure, whereas multilinear networks may lose this property when LN is applied, as
\begin{equation}\left\{ \begin{split}
(\boldsymbol{A}\boldsymbol{z}) * (\boldsymbol{B}\boldsymbol{z}) &= vec(\boldsymbol{z} \otimes \boldsymbol{z})(\boldsymbol{A}^T\odot \boldsymbol{B}^T),\\
\mathtt{LN}(\boldsymbol{A}\boldsymbol{z}) * (\boldsymbol{B}\boldsymbol{z}) &\neq vec(\boldsymbol{z} \otimes \boldsymbol{z})\ \mathtt{LN}(\boldsymbol{A}^T\odot \boldsymbol{B}^T).
\end{split}\right.
\label{eq2}
\end{equation}
Here $\mathtt{LN}(\cdot)$ represents layer normalization, while $A$ and $B$ are learnable matrices.

In this work, we aim to achieve two objectives: (1) re-establish the polynomial expansion form and analyze the differences between multilinear and quantum-inspired TNs; and (2) develop a layer-normalized multilinear DTTN$^\dagger$ that outperforms existing high-performance multilinear networks~\cite{cheng2024multilinear}.

\section{Method}
\label{sec:method}
In this section, we provide a detailed description of the Deep Tree Tensor Network (DTTN). We aim to construct a tree topology network by sequentially stacking AIM blocks, which consist solely of multilinear operations.
For an input image $\boldsymbol{x}$, we apply a vanilla linear projection, also known as patch embedding~\citep{tolstikhin2021mlp}, to obtain the feature map $\phi(\boldsymbol{x}, \boldsymbol{\Lambda}_\phi) \in \mathbb{R}^{W \times H \times C}$. Similar approaches are used in other methodologies.
Here $\boldsymbol{\Lambda}_\phi\in\mathbb{R}^{S^2\times C}$ represents a learnable matrix, where $S,C\in\mathbb{N}$ denote the local patch size and the number of output channels, respectively.
This procedure corresponds to the local mapping illustrated in Fig.~\ref{img1}, with its output serving as the input for the DTTN.
It should be noted that batch normalization (BN) operations following the linear layer have been omitted for brevity, as these operations can be integrated with the nearest linear layer during inference through structural re-parameterization technique~\cite{ding2021repvgg}.

\subsection{Antisymmetric Interaction Module}
The antisymmetric interaction module (AIM) is the core of DTTN. As illustrated in Fig.~\ref{img2}(d), for the $l$-th block input feature map $\boldsymbol{\mathcal{X}}^l\in\mathbb{R}^{W_l\times H_l\times C_l}$, we utilize an antisymmetric two-branch structure to capture the linear interactions of the input features separately.
Both parts, denoted as $f_1^l, f_2^l$, incorporate a depthwise convolution layer and a linear layer, but apply them in reversed order.
These layers are designed to capture spatial locality and channel interactions, respectively, effectively combining the advantages of CNN and MLP architectures~\cite{ref31,tolstikhin2021mlp}.
The antisymmetric design specifically targets reducing the complexity of AIM.
The ratio of parameters and FLOPs between the two branches can be expressed as follows:
\begin{equation}
\begin{split}
R_{Para}&=\frac{r_{exp}k^2C_l+r_{exp}C_l^2}{r_{exp}k^2C_l+r_{exp}^2C_l^2}\ \ \sim\ \ \frac{1}{r_{exp}},\\
R_{Flops}&=\frac{r_{exp}k^2W_lH_lC_l+r_{exp}W_lH_lC_l^2}{r_{exp}k^2W_lH_lC_l+r_{exp}^2W_lH_lC_l^2}\ \ \sim\ \ \frac{1}{r_{exp}},
\label{eq3}
\end{split}\end{equation}
where $r_{exp}\in\mathbb{N}_{+}$ represents the expansion ratio in the inverted bottleneck design, typically set to $3$, $k\in\mathbb{N}_{+}$ denotes the kernel size, and $C_l, W_l, H_l$ are the number of channels, width and height of the $l$-th block input feature map, respectively.
We use the hardware-friendly Hadamard product ($*$) to capture the second-order interactions of the branch outputs.
Following this, an optional LN and a linear projection layer are sequentially applied to the computation results.
Finally, a shortcut connection is used to preserve the input signal and accelerate training (see Fig.~\ref{img2}).
Overall, the AIM forward calculation can be expressed as
\begin{equation}\begin{split}
\boldsymbol{\mathcal{X}}^{l+1}&=\boldsymbol{\mathcal{X}}^{l}+
Pro\left(f_1^l(\boldsymbol{\mathcal{X}}^{l}) * f_2^l(\boldsymbol{\mathcal{X}}^{l})\right), \forall l \in \mathbb{K}_{L},
\label{eq4}
\end{split}\end{equation}
where $\text{Pro}(\cdot)$ denotes the projection transformation operation.
In summary, AIM captures second-order multiplicative interactions among input elements through multilinear operations without employing \textit{nonlinear activations}.
In contrast to core blocks inside other architectures, such as the Basic Block in $\mathcal{R}$-PolyNets~\cite{chrysos2023regularization} and Mu-layer\footnote{The MONet architecture comprises a stack of two variants of Mu-Layer. The second variant differs from the one shown in Fig.~\ref{img2}(c) in that it does not incorporate a spatial shift operation.} in MONet~\cite{cheng2024multilinear}, AIM employs an antisymmetric design with only one shortcut connection.
\begin{figure}[t]
\begin{minipage}[t]{0.42\textwidth}
    \includegraphics[width=\linewidth]{./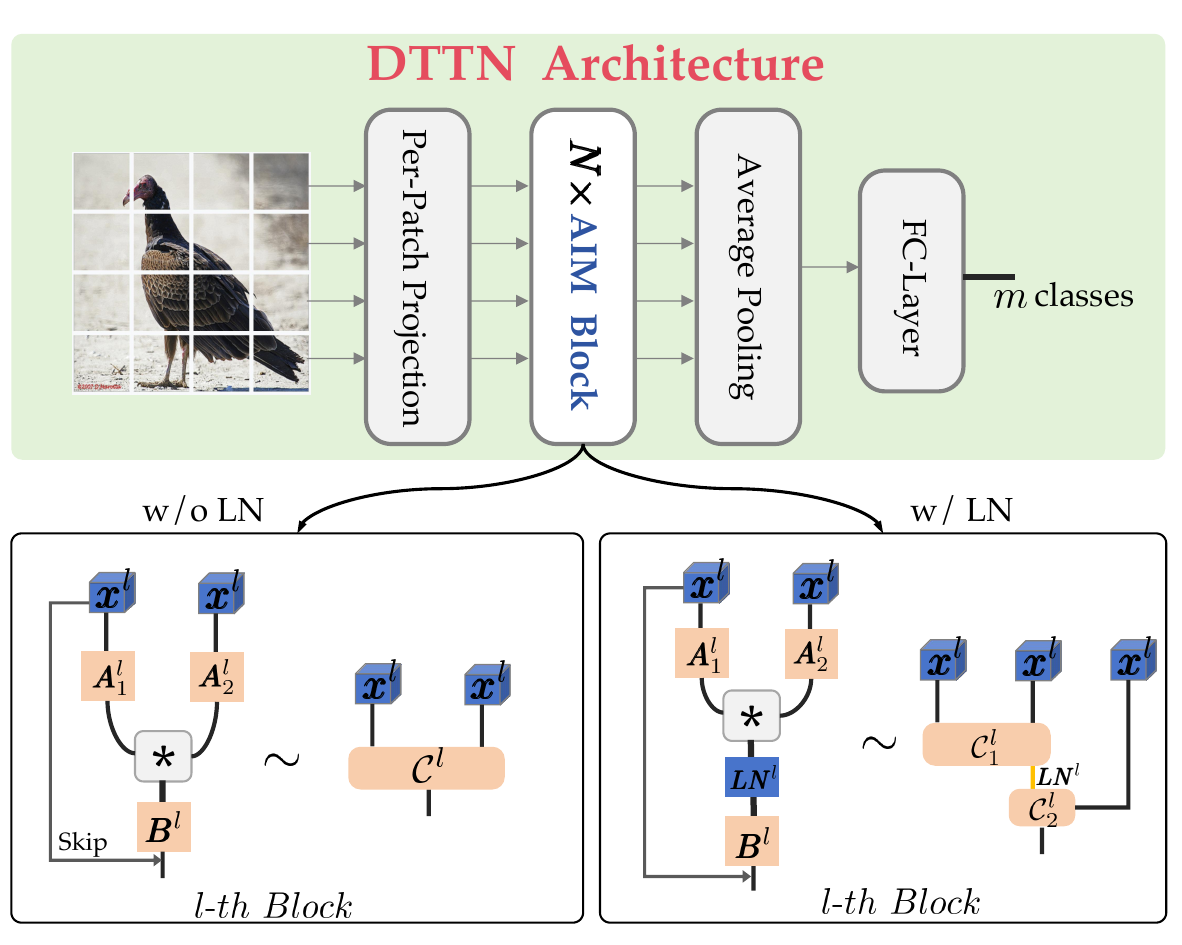}
    \captionof{figure}{Schematic overview of the DTTN architecture.} 
    \label{img3}
\end{minipage}
\hfill
\begin{minipage}[t]{0.58\textwidth}
\vspace{-50mm}
\captionof{table}{Specifications of different DTTN variants configuration. ``Tiny'', ``Small'', and ``Large'' refer to DTTN-T, DTTN-S, and DTTN-L configurations, respectively, each differing in parameter sizes. The primary distinction among these variants lies in the number of blocks and the hidden-sizes within their multi-stage structures. Additionally, models with layer normalization (LN) layers are denoted with a '$\dagger$', such as DTTN$^\dagger$-S.} 
    \centering
    \resizebox{1.0\textwidth}{!}{
    \begin{tabular}{@{}lcccc@{}}
\toprule
\textbf{Specification} & \textbf{Tiny} & \textbf{Small} & \textbf{Large}  \\
\midrule
Numbers of Blocks & 34 &44 & 56  \\
Hidden-size & 64,128,160,192 & 96,128,192,192 & 128,192,256,384 \\
Stages & 6,6,16,6 & 6,6,24,8 & 8,8,32,8 \\
Expansion ratio & 3 & 3 & 3 &  \\
Parameters (M) & 7.1 & 12.3 & 35.9 \\
FLOPs (B)  & 2.3 & 4.1 & 12.3 \\
\bottomrule
\label{tab1}
\end{tabular}}
\end{minipage}
\end{figure}

\subsection{Network Architecture}
Our proposed architecture is constructed by stacking $L$ AIM blocks sequentially.
As shown in Fig.~\ref{img3}, the final output is derived through average pooling and a fully-connected layer.
The DTTN architecture is multi-stage, in line with its peers~\cite{chrysos2021deep,hou2022vision,cheng2024multilinear}. By varying the hidden-sizes and the number of blocks in each stage, we have designed three variants: \textit{DTTN-T}, \textit{DTTN-S}, and \textit{DTTN-L} (see Table~\ref{tab1}), each with distinct parameters to facilitate a comparative analysis.
At a high level, we assert that the DTTN exhibits the following theoretical property:
\begin{Proposition}
The DTTN has the capability to capture $2^L$ multiplicative interactions among input elements, which can be represented in the format of Equation (\ref{eq1}) as $\Phi(\boldsymbol{x})=\otimes^{2^L}\phi(\boldsymbol{x},\boldsymbol{\Lambda}_\phi)$.
Consequently, the elements of $f(\boldsymbol{x})$ are homogeneous polynomials of degree $2^L$ over the feature map $\phi(\boldsymbol{x},\boldsymbol{\Lambda}_\phi)$.
\end{Proposition}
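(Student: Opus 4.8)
The plan is to argue by induction on the block index $l$, tracking the \emph{polynomial degree} of the feature map $\boldsymbol{\mathcal{X}}^l$ in the entries of the patch embedding $\phi(\boldsymbol{x},\boldsymbol{\Lambda}_\phi)$, and then reading off the tensor-product form. The key structural observation is that, inside each AIM, the two branches $f_1^l$ and $f_2^l$ consist only of a depthwise convolution and a linear (channel-mixing) layer—both of which are \emph{linear} maps on their input—so that $f_1^l(\boldsymbol{\mathcal{X}}^l)$ and $f_2^l(\boldsymbol{\mathcal{X}}^l)$ are homogeneous of the same degree as $\boldsymbol{\mathcal{X}}^l$, and the projection $Pro(\cdot)$ is again linear. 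Crucially, for the unfolding claim one works in the variant \emph{without} LN (as the statement and the earlier discussion around Equation~\ref{eq2} make explicit), so no normalization breaks homogeneity.

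First I would set up the base case: after patch embedding, $\boldsymbol{\mathcal{X}}^1 = \phi(\boldsymbol{x},\boldsymbol{\Lambda}_\phi)$ is homogeneous of degree $1 = 2^0$ in the entries of $\phi(\boldsymbol{x},\boldsymbol{\Lambda}_\phi)$ (BN being foldable into the linear layer by re-parametrization, as noted in the text, so it does not disturb homogeneity at inference). Then, assuming $\boldsymbol{\mathcal{X}}^l$ is a vector of homogeneous polynomials of degree $2^{l-1}$, I would analyze the recursion~\eqref{eq4}: the Hadamard product $f_1^l(\boldsymbol{\mathcal{X}}^l) * f_2^l(\boldsymbol{\mathcal{X}}^l)$ multiplies two degree-$2^{l-1}$ homogeneous polynomials entrywise, producing degree $2^{l-1} + 2^{l-1} = 2^l$; applying the linear $Pro$ preserves this degree. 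The residual term $\boldsymbol{\mathcal{X}}^l$ has degree $2^{l-1} < 2^l$, so in general $\boldsymbol{\mathcal{X}}^{l+1}$ is only a polynomial of degree $\le 2^l$, not homogeneous—so I would point out that the Proposition's homogeneity conclusion refers to the top-degree component, or equivalently to the architecture read without the shortcut (the shortcut being there for optimization, not expressivity). After $L$ blocks the degree of the leading component is $2^L$; feeding it through average pooling and the final fully-connected layer (both linear) keeps degree $2^L$, giving $f(\boldsymbol{x})$ as homogeneous polynomials of degree $2^L$ in $\phi(\boldsymbol{x},\boldsymbol{\Lambda}_\phi)$.

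Next I would translate ``each output coordinate is a homogeneous polynomial of degree $2^L$ in the vector $\phi(\boldsymbol{x},\boldsymbol{\Lambda}_\phi)$'' into the tensor-contraction form of Equation~\ref{eq1}. The standard identity is that any homogeneous degree-$p$ polynomial map $\boldsymbol{v}\mapsto \boldsymbol{W}^m * (\otimes^{p}\boldsymbol{v})$ ranges over exactly the homogeneous degree-$p$ maps as $\boldsymbol{W}^m$ ranges over $(pN{+}1)$-th order coefficient tensors; the multilinear identity in the first line of \eqref{eq2}, iterated, is precisely the mechanism—each Hadamard product of two linear images $\boldsymbol{A}\boldsymbol{z} * \boldsymbol{B}\boldsymbol{z} = vec(\boldsymbol{z}\otimes\boldsymbol{z})(\boldsymbol{A}^T\odot\boldsymbol{B}^T)$ promotes a tensor-product factor, and composing $L$ such doublings yields $\otimes^{2^L}\phi(\boldsymbol{x},\boldsymbol{\Lambda}_\phi)$ with an induced coefficient tensor $\boldsymbol{W}^m$ built by repeated Khatri–Rao/contraction of the per-block weight tensors. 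I would make explicit that this induced $\boldsymbol{W}^m$ inherits a \emph{tree} contraction topology (pairwise merging at each of the $L$ levels) with shared factors across the tree—this is the ``parameter-sharing'' remark—so the claimed $\Phi(\boldsymbol{x}) = \otimes^{2^L}\phi(\boldsymbol{x},\boldsymbol{\Lambda}_\phi)$ holds with $\boldsymbol{W}^m$ of the stated order.

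The main obstacle I anticipate is \emph{bookkeeping the shortcut connection and the spatial (depthwise-conv) structure simultaneously}: the residual makes $\boldsymbol{\mathcal{X}}^{l+1}$ inhomogeneous (a sum of degrees $\{1,2,4,\dots,2^l\}$ after expanding), so one must either restrict attention to the top-degree part or argue that lower-degree terms are themselves captured by lower-order $\otimes^{2^k}\phi$ factors and can be absorbed—i.e. the honest statement is that $f(\boldsymbol{x})$ is a polynomial of degree $2^L$ whose unique top-homogeneous part has the stated tensor-product form. A secondary subtlety is that depthwise convolution mixes \emph{spatial} positions while the linear layer mixes \emph{channels}; one must confirm both are genuinely linear in $\boldsymbol{\mathcal{X}}^l$ (they are, since no activation intervenes) so that the Khatri–Rao bookkeeping from \eqref{eq2} applies coordinate-wise after vectorizing $\mathbb{R}^{W_l\times H_l\times C_l}$. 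I would flag that handling the LN-equipped variant DTTN$^\dagger$ is explicitly \emph{out of scope} here, consistent with the inequality in the second line of \eqref{eq2}.
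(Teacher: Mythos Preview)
Your overall strategy---induction on the block index, doubling the degree via the Hadamard product, and unrolling into tensor-product form through the Khatri--Rao identity of \eqref{eq2}---matches the paper's proof exactly. The paper also vectorizes $\boldsymbol{\mathcal{X}}^l$ and writes the AIM update as a contraction against a third-order tensor $\boldsymbol{\mathcal{Z}}^l = \mathtt{Reshape}\bigl(\boldsymbol{B}^l(\boldsymbol{A}_1^{l\,T}\odot\boldsymbol{A}_2^{l\,T})^T\bigr)$, just as you sketch.

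The one place you diverge is in handling the shortcut. You correctly flag that the residual $\boldsymbol{x}^l + (\text{degree-}2^l\text{ term})$ destroys homogeneity, and you propose to either (i) keep only the top-degree part or (ii) vaguely ``absorb'' the lower-degree pieces. Option (i) proves a weaker claim than the Proposition actually states (it says the elements \emph{are} homogeneous of degree $2^L$, not merely of degree $\le 2^L$). The paper's resolution is a standard homogenization trick that you are missing: append an extra constant coordinate $1$ to the local feature map, enlarging $\mathbb{R}^{D_l}$ to $\mathbb{R}^{D_l+1}$. Then the shortcut term $\boldsymbol{x}^l_\tau = \boldsymbol{x}^l_\tau \cdot 1$ is itself a degree-$2$ monomial in the augmented vector, and the bias terms are likewise $1\cdot 1$. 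With this augmentation the entire AIM update becomes
\[
\boldsymbol{x}^{l+1}_\tau \;=\; \sum_{w,\rho=1}^{D_l+1} \boldsymbol{\mathcal{Z}}^{*l}_{(w,\rho,\tau)}\,\boldsymbol{x}^l_w\,\boldsymbol{x}^l_\rho,
\]
genuinely homogeneous of degree $2$ in $\boldsymbol{x}^l$, and the induction goes through cleanly to give exact degree $2^L$ over the (augmented) $\phi(\boldsymbol{x},\boldsymbol{\Lambda}_\phi)$. This is why the statement can legitimately claim homogeneity rather than just bounded degree; your top-degree-only reading undersells it.
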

\noindent It is important to note, however, that this characteristic no longer holds when LN is incorporated into the network, as LN introduces second-order statistical information. This phenomenon is clearly exemplified in models like MONet and DTTN$^\dagger$, where the network essentially behaves as a standard multilinear model.

\vspace{1mm}
\noindent\textbf{Unfolding Topology}.
As illustrated in Fig.~\ref{img3}, the AIM is equivalent to a binary tree node in the absence of LN. In this case, AIM forward computation can be regarded as a TN contraction, which can be formulated as
\begin{equation}\begin{split}
\boldsymbol{x}^{l+1}=&\boldsymbol{x}^{l}+\boldsymbol{B}^{l} \left((\boldsymbol{A}_1^{l}\boldsymbol{x}^{l}) * (\boldsymbol{A}_2^{l}\boldsymbol{x}^{l})\right)\\
=&\boldsymbol{x}^{l}+\mathtt{Reshape}\left(  \boldsymbol{B}^{l} (\boldsymbol{A}_1^{l\ ^T}\odot \boldsymbol{A}_2^{l\ ^T})^T\right)\times_{2,3}^{1,2}(\boldsymbol{x}^{l}\otimes\boldsymbol{x}^{l})\\
=& \boldsymbol{\mathcal{C}}^l\times_{2,3}^{1,2}(\boldsymbol{x}^{l}\otimes\boldsymbol{x}^{l})
\label{eq5}
\end{split}
\end{equation}
Here $\boldsymbol{x}^{l}=vec(\boldsymbol{\mathcal{X}}^l)\in\mathbb{R}^{W_l H_l C_l}$, $\boldsymbol{\mathcal{C}}^l$ signifies a third-order tensor representing structural re-parameterization with the learnable matrices $\boldsymbol{A}_1^{l},\boldsymbol{A}_2^{l}$ and $\boldsymbol{B}^{l}$.
The AIM captures the second-order multiplicative interactions among input elements via \textit{tensor product} and structures the three-order tensor utilizing \emph{PyTorch} operators.
Thus, one DTTN composed of $L$ AIMs can essentially be unfolded into a tree network with $2^L$ leaf nodes, as pictured in Fig.~\ref{img1}.

\subsection{DTTN vs. Other Architectures}

\noindent\textbf{DTTN $\textbf{\&}$ Polynomial Networks}. DTTN can be expressed in the same polynomial expansion form as $\Pi$-Net~\cite{chrysos2021deep}, which is given by
\begin{equation}\begin{split}
f(\boldsymbol{x})=\sum_{l=1}^{2^L}\left( \mathcal{W}^{[l]}\times_{2,\dots,l+1}^{1,\dots,l} (\otimes^{l}\phi(x)) \right) + \beta,
\label{eq6}
\end{split}\end{equation}
where $\beta\in\mathbb{R}^m$ represents the constant term, and $\mathcal{W}^{[l]}$ is a $(l+1)$-th order learnable parameter tensor.
Notably, equations (\ref{eq1}) and (\ref{eq6}) become equivalent upon introducing a bias term for the elements of the input vector $\boldsymbol{x}$. The networks exhibit different structured representations of the coefficients $\mathcal{W}^{[l]}$, for all $l \in \mathbb{K}_{2^L}$, due to their distinct network blocks and computational graphs.

\noindent\textbf{DTTN $\textbf{\&}$ Multilinear Networks}.
We note that networks which exclusively involve multilinear operations—such as MONet, DTTN$^\dagger$, and $\mathcal{R}$-PolyNets—yet lack the polynomial expansion structure can be classified as multilinear networks.

\noindent\textbf{DTTN $\textbf{\&}$ Quantum-inspired TNs}. 
The main advantages of DTTN over Quantum-inspired TNs are twofold.
(1) \textit{The alternative of the local map function.}
Existing TNs employ trigonometric functions for local mapping, which leads to the absence of higher-order ($\geq 2$) terms involving input elements in the network's unfolded form. This limitation results in a loss of feature self-interaction capabilities.
(2) \textit{Higher bond dimension induced by parameter-sharing properties.}
Quantum-inspired TNs facilitate the parallel contraction of shared indices among $N$ cores within the same layer. However, due to limited memory capacity, the bond dimension must often be restricted to smaller values.
The following theorem establishes the equivalence between DTTNs and TNs:
\begin{Theorem}
Given the local mapping function $\phi^{i_j}(x_j) = [x_j^0, \dotsc, x_j^{2^L}]^T$, a polynomial network with the expansion form of Equation~(\ref{eq6}) can be transformed into a quantum-inspired TNs model with finite bond dimension.
\end{Theorem}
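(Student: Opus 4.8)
The plan is to exhibit an explicit tensor-network factorization of the coefficient tensors $\mathcal{W}^{[l]}$ appearing in Equation~\ref{eq6} once the local map is taken to be the monomial vector $\phi^{i}(x_i)=[x_i^0,\dots,x_i^{2^L}]^T$. The key observation is that, with this choice of $\phi$, a single term $\mathcal{W}^{[l]}\times_{2,\dots,l+1}^{1,\dots,l}(\otimes^{l}\phi(x))$ already extracts monomials of every degree from $0$ up to $l\cdot 2^L$ in the original inputs, because each factor $\phi^{i}(x_i)$ carries all powers $x_i^0,\dots,x_i^{2^L}$. Consequently I do not need $l$ to range up to $2^L$ at all: the full degree-$2^L$ homogeneous (and, with the bias coordinate $x_i^0=1$, all lower-degree) polynomial content of Equation~\ref{eq6} can be repackaged as a single contraction $\widetilde{\mathcal{W}}\times(\otimes^{N}\phi(x))$ over the $N$ input sites, where $\widetilde{\mathcal{W}}$ is an $(N+1)$-th order tensor with each physical leg of dimension $2^L+1$. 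This is exactly the form of Equation~\ref{eq1}, so the problem reduces to showing $\widetilde{\mathcal{W}}$ admits a finite-bond-dimension TN representation.

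The second step is the factorization itself. I would proceed by writing $\widetilde{\mathcal{W}}$ in, say, MPS/Tensor-Train form by sweeping left to right over the $N$ sites: the bond carried between site $j$ and site $j+1$ must record just enough information to reconstruct the remaining admissible monomials, namely the total degree accumulated so far (an integer in $\{0,1,\dots,2^L\}$, since any monomial of the target polynomial has total degree at most $2^L$) together with the finite ``state'' of the DTTN's binary-tree contraction restricted to the first $j$ leaves. Because the polynomial degree is capped at $2^L$ and the per-site physical dimension is $2^L+1$, each TT-core has bond dimension bounded by a fixed function of $2^L$ (and of the per-layer hidden sizes), which is finite and independent of $N$ in the homogeneous case. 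I would make this precise either by (i) a direct dimension count on the space spanned by the ``partial contractions'' $\{\widetilde{\mathcal{W}}\text{ restricted to the first }j\text{ legs}\}$ — i.e. invoking the standard fact that the $j$-th TT-rank equals the rank of the $j$-th unfolding matrix — and bounding that rank by the number of reachable (degree, tree-state) pairs, or (ii) giving the cores constructively from the $\mathcal{C}^l$ tensors of Equation~\ref{eq5}, reshaping the binary tree of AIMs into a caterpillar/MPS by the usual tree-to-MPS contraction, which only inflates bond dimension multiplicatively by the depth.

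The third, bookkeeping step is to confirm the resulting object is genuinely a ``quantum-inspired TN model'' in the sense used in the paper: its contraction with $\Phi(\boldsymbol{x})=\otimes^{N}\phi(\boldsymbol{x})$ reproduces $f(\boldsymbol{x})$ of Equation~\ref{eq6} exactly (here I use the remark in the excerpt that Equations~\ref{eq1} and~\ref{eq6} coincide once a bias/constant coordinate is appended to each input, which is automatic since $x_i^0=1$ is the first entry of $\phi^{i}$), and that the output index $m$ can be attached to any single core without affecting the bond bound.

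The main obstacle I anticipate is the bond-dimension bound in the inhomogeneous regime. If one insists on representing \emph{all} terms $l=1,\dots,2^L$ of Equation~\ref{eq6} simultaneously rather than only the top-degree homogeneous part, the naive construction stacks $2^L$ separate networks and the bond dimension grows with $2^L$ — still finite, but one should be careful to phrase the theorem's ``finite bond dimension'' as finite for fixed depth $L$, not uniformly in $L$. The cleanest route is to lean on the constant coordinate $x_i^0 = 1$ so that a single homogeneous degree-$2^L$ tensor already absorbs all lower-order terms, which keeps the per-site physical dimension at $2^L+1$ and the argument uniform; making that absorption rigorous (i.e. checking that the coefficient matching is consistent across all multidegrees and does not force extra bond states) is the part that needs the most care, and I would handle it by an explicit change of basis from the monomial basis $\{1,x,\dots,x^{2^L}\}$ to the ``power-of-one-plus'' basis that makes the tree contraction block-triangular in total degree.
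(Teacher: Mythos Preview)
Your proposal is correct and shares the paper's core idea: once each site carries the monomial vector $[x_i^0,\dots,x_i^{2^L}]^T$, the degree-$\leq 2^L$ polynomial produced by Equation~\ref{eq6} can be written as a single contraction $\widetilde{\mathcal{W}}\times(\otimes^N\phi(x))$ in the form of Equation~\ref{eq1}, and one then argues this coefficient tensor has a TN representation with finite bond dimension.

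Where you diverge is in how much work you do for that last step. The paper's own proof is far lighter than your plan: it simply notes that every output coordinate of the polynomial network is a linear combination of monomials $x_1^{k_1}\cdots x_N^{k_N}$ with $k_1+\cdots+k_N\leq 2^L$, observes that the TN with the stated local map ``encompasses'' each such term (i.e.\ the TN function class contains all such polynomials), and stops there --- implicitly using the trivial fact that any fixed finite tensor $\widetilde{\mathcal{W}}\in\mathbb{R}^{(2^L+1)\times\cdots\times(2^L+1)\times m}$ admits \emph{some} finite-bond-dimension TN factorization. Your route via explicit TT-rank bounds, degree-tracking bond states, and tree-to-caterpillar reshaping of the $\boldsymbol{\mathcal{C}}^l$ tensors is more constructive and would actually yield quantitative control on the bond dimension, which the paper does not attempt; but it is also substantially more machinery than the theorem as stated requires. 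Your worry about the inhomogeneous terms and the ``power-of-one-plus'' change of basis is likewise unnecessary: the paper handles lower-order terms exactly as you suggest, by leaning on the constant coordinate $x_i^0=1$ (it rewrites everything as a homogeneous degree-$2^L$ sum with nonnegative exponents summing to $2^L$), without any basis change.
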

Since the internal cores of a TN can be decomposed into a ``\textbf{core-diagonal factor-core}'' structure via Higher-order Singular Value Decomposition (HOSVD)~\cite{ref21} and subsequently merged with connectivity cores, we regard the structural differences between DTTN and quantum-inspired TNs as negligible.
We believe the above theorem not only establishes an equivalence between quantum-inspired TNs and modern architectures, but also offers insights for the future development of more interpretable and high-performance TN models.
\begin{table*}[!ht]
\vspace{-2mm}
\caption{ImageNet-1K classification accuracy for various network architectures. Models that can be polynomially expanded are marked in red, whereas other multilinear models that do not include activation functions but incorporate layer normalization are marked in green. Our {DTTN$^\dagger$-T} outperforms the previous SOTA model MONet-T by 0.9\% with fewer parameters and FLOPs.\vspace{-2mm}}
\centering
\resizebox{0.75\textwidth}{!}{
\begin{tabular}{@{}l|ccccccc@{}}
\toprule
\multicolumn{1}{c}{\textbf{Model}} & \textbf{Top-1(\%)}  & \small{\textbf{Params (M)}} & \small{\textbf{FLOPs(B)}} & \textbf{Epoch} & \small{\textbf{Activation}} & \small{\textbf{Attention}} & \textbf{Reso.} \\
\midrule
\multicolumn{8}{l}{\textbf{CNN-based}} \\
\midrule
ResNet-50~\cite{ref31} & {77.2} & 25.0 & 4.1 & - & ReLU & $\boldsymbol{\times}$ & $224^2$ \\
A$^2$Net~\cite{chen20182} & 77.0 & 33.4 & 31.3 & - & ReLU & \checkmark  & $224^2$\\
AA-ResNet-152~\cite{bello2019attention} & 79.1 & 61.6 & 23.8 & 100 & ReLU & \checkmark & $224^2$\\
RepVGG-B2g4~\cite{ding2021repvgg} & 79.4 & 55.7 & 11.3 & 200 & ReLU & $\boldsymbol{\times}$ & $224^2$\\
\midrule
\multicolumn{8}{l}{\textbf{Transformer- and Mamba-based}} \\
\midrule
ViT-B/16~\cite{dosovitskiy2020image} & 77.9 & 86.0 & 55.0 & 300 & GeLU & \checkmark & $224^2$\\
DeiT-S/16~\cite{touvron2021training} & 81.2 & 24.0 & 5.0 & 300 & GeLU & \checkmark & $224^2$\\
Swin-T/16~\cite{liu2021swin} & 81.3 & 29.0 & 4.5 & 300 & GeLU & \checkmark & $224^2$\\
Vim-S~\cite{zhu2024vision} & 80.5 & 26.0 & - & 300 & SiLU & \checkmark & $224^2$\\
\midrule
\multicolumn{8}{l}{\textbf{MLP-based}} \\
\midrule
\small{MLP-Mixer-B/16~\cite{tolstikhin2021mlp}}
 & 76.4 & 59.0 & 11.6 & 300 & GeLU & $\boldsymbol{\times}$ & $224^2$\\
\small{MLP-Mixer-L/16~\cite{tolstikhin2021mlp}} & 71.8 & 507.0 & 44.6 & 300 & GeLU & $\boldsymbol{\times}$ & $224^2$\\
CycleMLP-T~\cite{chen2021cyclemlp} & 81.3 & 28.8 & 4.4 & 300 & GeLU & $\boldsymbol{\times}$ & $224^2$\\
Hire-MLP-Tiny~\cite{guo2022hire} & 79.8 & 18.0 & 2.1 & 300 & GeLU & $\boldsymbol{\times}$ & $224^2$\\
ResMLP-24~\cite{touvron2022resmlp} & 79.4 & 6.0 & 30.0 & 300 & GeLU & $\boldsymbol{\times}$ & $224^2$\\
$S^2$MLP-Wide~\cite{yu2022s2} & 80.0 & 71.0 & 14.0 & 300 & GeLU & $\boldsymbol{\times}$ & $224^2$\\
$S^2$MLP-Deep~\cite{yu2022s2} & 80.7 & 10.5 & 51.0 & 300 & GeLU & $\boldsymbol{\times}$ & $224^2$\\
ViP-Small/14~\cite{hou2022vision} & 80.5 & 30.0 & 6.5 & 300 & GeLU & \checkmark & $224^2$\\
AFFNet~\cite{huang2023adaptive} & 79.8 & 6.0 & 1.5 & 300 & ReLU & \checkmark & $256^2$\\
\midrule
\multicolumn{8}{l}{\textbf{Polynomial- and Multilinear-based}} \\
\midrule
\rowcolor{pink!30}
$\Pi$-Nets~\cite{chrysos2021deep} & 65.2 & 12.3 & 1.9 & 90 & - & $\boldsymbol{\times}$ & $224^2$\\
\rowcolor{pink!30}
\textbf{DTTN-S(ours) }& \textbf{71.8} / \textbf{77.2} & 12.3 & 4.1 & 90 / 300 & - & $\boldsymbol{\times}$ & $224^2$\\

Hybrid $\Pi$-Nets~\cite{chrysos2021deep} & 70.7 & 11.9 & 1.9 & 90 & ReLU+Tanh & $\boldsymbol{\times}$ & $224^2$\\

PDC~\cite{chrysos2022augmenting} & 71.0 & 10.7 & 1.6 & 100 & ReLU+Tanh & $\boldsymbol{\times}$ & $224^2$\\
PDC-comp~\cite{chrysos2022augmenting} & 70.2 & 7.5 & 1.3 & 100 & ReLU+Tanh & $\boldsymbol{\times}$ & $224^2$\\
\rowcolor{green!20}
$\mathcal{R}$-PolyNets~\cite{chrysos2023regularization} & 70.2 & 12.3 & 1.9 & 120 & - & $\boldsymbol{\times}$ & $224^2$\\
\rowcolor{green!20}
$\mathcal{D}$-PolyNets~\cite{chrysos2023regularization} & 70.0 & 11.3 & 1.9 & 120 & - & $\boldsymbol{\times}$ & $224^2$\\
\rowcolor{green!20}
MONet-T~\cite{cheng2024multilinear} & 77.0 & 10.3 & 2.8 & 300 & - & $\boldsymbol{\times}$ & $224^2$\\
\rowcolor{green!20}
\textbf{DTTN$^\dagger$-T(ours) }& \textbf{77.9} & 7.1 & 2.3 & 300 & - & $\boldsymbol{\times}$ & $224^2$\\
\rowcolor{green!20}
\textbf{DTTN$^\dagger$-S(ours) }& \textbf{79.4} & 12.3 & 4.1 & 300 & - & $\boldsymbol{\times}$ & $224^2$\\\rowcolor{green!20}
MONet-S~\cite{cheng2024multilinear} & 81.3 & 32.9 & 6.8 & 300 & - & $\boldsymbol{\times}$ & $224^2$\\\rowcolor{green!20}
\textbf{DTTN$^\dagger$-L(ours) }& \textbf{82.4} & 35.9 & 12.3 & 300 & - & $\boldsymbol{\times}$ & $224^2$\\
\bottomrule
\end{tabular}}
\label{tab2}
\end{table*}

\section{Experiments}
\label{sec:exp}
In this section, we provide a comprehensive assessment of the DTTN's effectiveness. Specifically, in Section 4.1, we perform experiments on a series of image classification benchmarks to validate the model's superiority over other multilinear and TN architectures. In Section 4.2, we show the broader impact of the DTTN across other domains, including recommendation system and partial differential equation (PDE) solving.
Section 4.3 presents ablation studies aimed at examining the impact of various design choices. The paper concludes with a discussion of the model's limitations. Further analysis and detailed results are provided in the appendix.

\subsection{Visual Recognition}
\noindent\textbf{Setup and Training Details}. A series of benchmarks with different types, scales, and resolutions are employed for the experiments, including CIFAR-10~\cite{krizhevsky2009learning}, Tiny ImageNet~\cite{le2015tiny}, ImageNet-100~\cite{yan2021dynamically}, ImageNet-1K~\cite{russakovsky2015imagenet}, MNIST, and Fashion-MNIST~\cite{xiao2017fashion}. A detailed description of the benchmarks and training configurations is included in the supplement.

\begin{table*}[ht]
\hspace{10mm}
\begin{minipage}[t]{0.45\textwidth}
\centering
\captionof{table}{Experimental validation of various network architectures was conducted on smaller benchmarks with differing resolutions. The top-performing results are highlighted in bold. Among these, our DTTN$^\dagger$-S model achieves the best performance across all benchmarks, outperforming other polynomial and multilinear networks.}
\resizebox{1.0\textwidth}{!}{
\begin{tabular}{l|ccc}
\toprule
& \textbf{CIFAR-10}  & \textbf{Tiny ImageNet} & \textbf{ImageNet-100} \\
\midrule
{Resolution} & $32^2$ & $64^2$ & $224^2$ \\
\midrule
Resnet18       & 94.4       & 61.5    & 85.6                  \\
MLP-Mixer      & 90.6       & 45.6    & 84.3                   \\
Res-MLP        & 92.3             & 58.9        & 84.8                   \\
$\Pi$-Nets       & 90.7                   & 50.2                     &  81.4                 \\
Hybrid $\Pi$-Nets & 94.4                      & 61.1                   & 85.9                   \\
PDC            & 90.9                      & 45.2    & 82.8 \\
$\mathcal{D}$-PolyNets & 94.7     & 61.8                & 86.2         \\
MONet-T        & {94.8}     & 61.5          & 87.2                \\
\midrule
DTTN$^\dagger$-S        & \textbf{95.0}    & \textbf{63.8}    & \textbf{87.7}  \\
\bottomrule
\end{tabular}}
\label{tab3}
\end{minipage}
\hspace{5mm}
\begin{minipage}[t]{0.35\textwidth}
\vspace{-2mm}
\centering
\captionof{table}{
Experimental validation was conducted comparing the DTTN-S with quantum-inspired TNs on the MNIST and Fashion-MNIST datasets. The highest performances, marked in bold, are achieved by our DTTN-S, which outperforms previous tensor networks, including aResMPS that utilizes residual connections and ReLU activation.}
\resizebox{1.0\textwidth}{!}{
\begin{tabular}{l|cc}
\toprule
\textbf{Model} & \textbf{MNIST}  & \textbf{Fashion-MNIST}\\
\midrule
MPS Machine        & 0.9880    & 0.8970  \\
Bayesian TN        & -   & 0.8692  \\
PEPS        & -   & 0.883  \\
LoTeNet       & 0.9822    & 0.8949  \\
aResMPS       & 0.9907    & 0.9142  \\
\midrule
DTTN-S        & \textbf{0.9930}    & \textbf{0.9236}  \\
\bottomrule
\end{tabular}}
\label{tab4}
\end{minipage}
\end{table*}

\begin{table*}[ht]
\hspace{7mm}
\begin{minipage}[t]{0.45\textwidth}
\vspace{-45mm}
\centering
\captionof{table}{
Validating AIM as a pluggable module for enhancing feature interaction in recommendation models with consistent performance gains.}
\resizebox{1.0\textwidth}{!}{
\begin{tabular}{l|cc}
\toprule
\textbf{Model} & Criteo & Avazu \\
\midrule
DeepFM & 80.12 & 75.46\\
DeepFM+AIM & 80.44$_{+0.32}$ & 75.73$_{+0.27}$\\
\midrule
FiBiNet & 80.42 & 76.01\\
FiBiNet+AIM & 80.97$_{+0.55}$ & 76.08$_{+0.07}$\\
\midrule
DCN-V2 & 80.93 & 76.14\\
DCN-V2+AIM & 81.15$_{+0.22}$ & 76.52$_{+0.38}$\\
\bottomrule
\end{tabular}}
\label{tab5}
\end{minipage}
\hspace{5mm}
\begin{minipage}[t]{0.4\textwidth}
\includegraphics[width=\linewidth]{./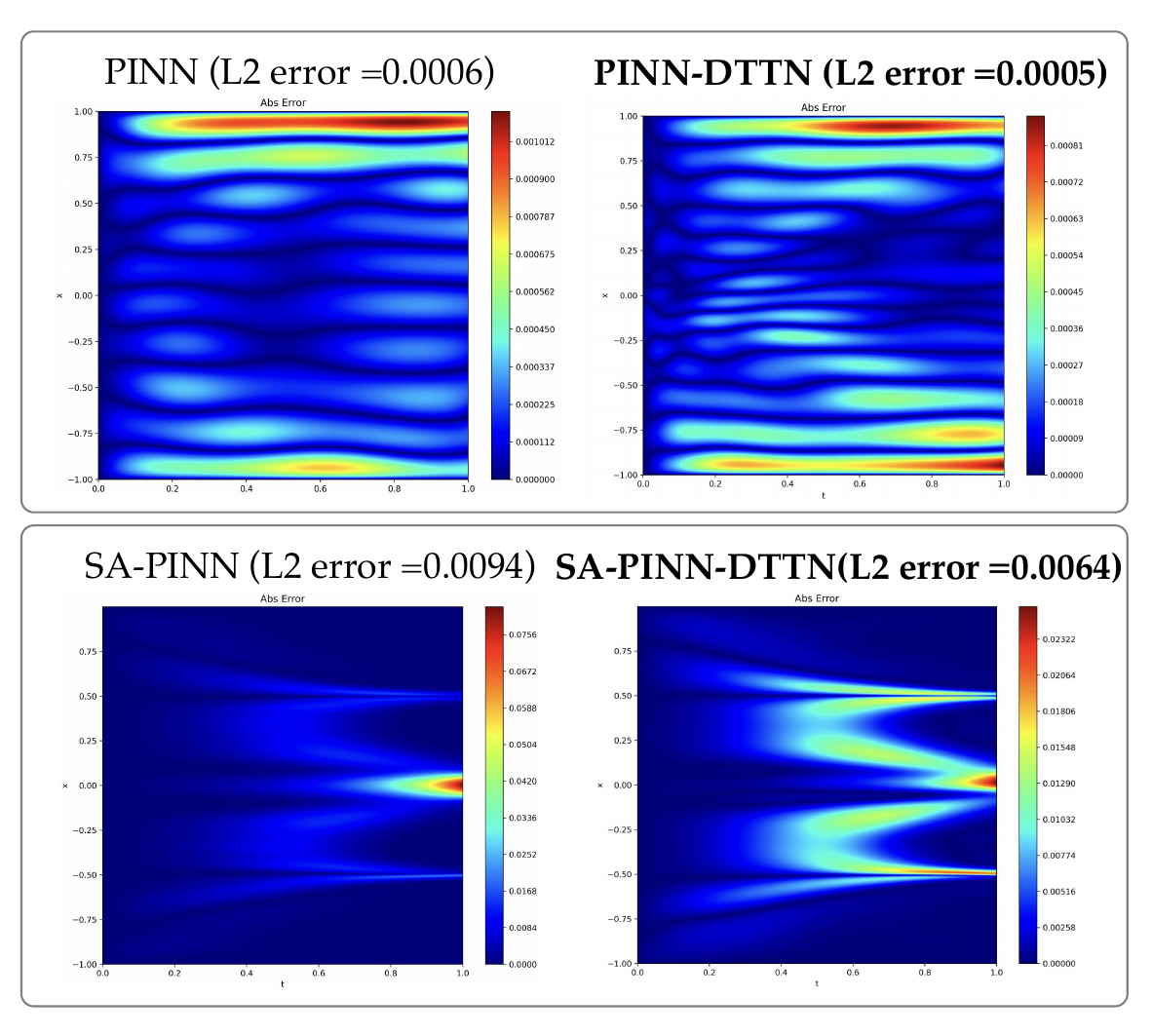} 
\captionof{figure}{Performance of PINNs on linear and nonlinear Allen-Cahn PDEs: L2 error and absolute error across the Spatial-Temporal domain.}
\label{img5}
\end{minipage}
\end{table*}

\begin{table*}[t]
\vspace{-2mm}
\hspace{-2mm}
\begin{minipage}[t]{0.34\textwidth}
    \centering
    \captionof{table}{The influence of network depth and width on model performance.}
    \resizebox{1.0\textwidth}{!}{
\begin{tabular}{l|cc}
\toprule
\textbf{} & Top-1 (\%) & Params(M)\\
\midrule
$L$=8, $d$=256  & 79.2 & 5.6\\
$L$=16,$d$=256  & 85.5 &10.2\\
$L$=24,$d$=256  & 86.8  &14.8\\
$L$=32,$d$=256  & \textbf{87.2}  &19.4\\
\midrule
$L$=32,$d$=64  & 63.4 &1.3\\
$L$=32,$d$=128  & 82.5 &4.9\\
$L$=32,$d$=512  & \textbf{87.9 } &76.8\\
\bottomrule
\end{tabular}}
\label{tab7}
\end{minipage}
\hspace{2mm}
\begin{minipage}[t]{0.32\textwidth}
    \centering
    \captionof{table}{The influence of different design choices for AIM on the performance of the DTTN variants.}
    \resizebox{1.0\textwidth}{!}{
\begin{tabular}{l|cc}
\toprule
\textbf{} & Top-1 (\%) & Params(M)\\
\midrule
SIM-Conv  & 86.2 & 9.1\\
SIM-Linear  & 84.9  & 4.8\\
DTTN$^\dagger$-T   & \textbf{86.4} & 6.9\\
\midrule
Sim-Conv  & 87.8 &15.9\\
Sim-Linear  & 85.2 & 8.3\\
DTTN$^\dagger$-S   & {87.7 }& 12.1\\
\bottomrule
\end{tabular}}
\label{tab8}
\end{minipage}
\hspace{2mm}
\begin{minipage}[t]{0.30\textwidth}
\centering
\captionof{table}{The influence of layer normalization inside AIM on the performance of the DTTN variants.\vspace{-3mm}}
\resizebox{1.0\textwidth}{!}{
\begin{tabular}{l|cc}
\toprule
\textbf{Model} & Top-1 (\%) & Params(M)\\
\midrule
DTTN-T & 85.6 &6.9\\
DTTN$^\dagger$-T  & \textbf{86.4}$_{+1.8}$&6.9\\
\midrule
DTTN-S & 87.3 &12.1\\
DTTN$^\dagger$-S  & \textbf{87.7}$_{+0.4}$ &12.1\\
\midrule
DTTN-L & 87.6 & 35.6\\
DTTN$^\dagger$-L  & \textbf{88.1}$_{+0.5}$ & 35.6\\
\bottomrule
\end{tabular}}
\label{tab9}
\end{minipage}
\end{table*}

\noindent\textbf{DTTN vs. Polynomial Networks}.
Table~\ref{tab2} reports the results of the unfoldable networks DTTN-S and $\Pi$-Nets~\cite{chrysos2021deep} on ImageNet-1K (light red areas), using neither activation functions nor instance normalization. DTTN-S achieves Top-1 accuracy of 71.8\% and 77.2\% after 90 and 300 training epochs, respectively, significantly outperforming $\Pi$-Nets by 5.6\% and 12\%, respectively.
Additionally, DTTN-S maintains a significant advantage—nearly a 10\% improvement in accuracy—over Hybrid $\Pi$-Nets that incorporate activation functions (which lose their unfolding properties).

\noindent\textbf{DTTN vs. Multilinear Networks}. Fig.~\ref{img2}(e) illustrates the performance of various multilinear networks trained on ImageNet-1K over different epochs, with the curve for DTTN$^\dagger$-S showing superior results compared to others. Tables~\ref{tab2} and \ref{tab3} detail the Top-1 accuracies of multilinear architectures across a range of benchmarks of varying scales, including CIFAR-10, Tiny ImageNet, ImageNet-100, and ImageNet-1K. Specifically, DTTN-S achieves improvements of 0.2\%, 2.3\%, 0.5\%, and 2.4\% over the previous best models at similar scales.
Moreover, the unfoldable DTTN-S attains an impressive 77.2\% accuracy on ImageNet-1K without instance normalization. Additionally, DTTN$^\dagger$-T surpasses the prior state-of-the-art model MONet~\cite{cheng2024multilinear} by 0.9\%, while utilizing approximately 30\% fewer parameters and reducing FLOPs by 20\%.
These comparisons demonstrate that DTTN achieves significantly faster convergence and superior performance.

\noindent\textbf{DTTN vs. Quantum-inspired TNs}.
DTTN outperforms other quantum-inspired TNs on smaller benchmarks like MNIST and Fashion-MNIST.
Notably, these models have not yet been successfully scaled to large benchmarks.
The baselines included for comparison are MPS~\cite{stoudenmire2016supervised}, Bayesian TNS, PEPS~\cite{cheng2021supervised}, LoTeNet~\cite{selvan2020tensor}, and aResMPS~\cite{meng2023residual}. 
The test results are reported in Tab.~\ref{tab4}, where DTTN-S consistently achieves the best performance. Specifically, on the Fashion-MNIST dataset, DTTN-S outperformed the second-best model, aResMPS, by 0.96\%. This superior performance can be attributed to the higher bond dimension facilitated by parameter sharing and self-interaction capabilities, as discussed in Section 3.3.

\noindent\textbf{DTTN vs. Advanced Architectures}.
We further demonstrate the competitive performance of DTTN relative to modern architectures, marking it as the first TN model to be applied to large-scale benchmarks.
Table~\ref{tab2} reports the Top-1 accuracy of the DTTN family compared to other models, including CNN-based, Transformer-based, Mamba-based, and MLP-based architectures trained on ImageNet-1K. The DTTN$^\dagger$-L achieves an accuracy of 82.4\% with 35.9M parameters, which is competitive and outperforms models such as DeiT-S/16~\cite{touvron2021training}, ViP-Small/14~\cite{hou2022vision}, and Vim-S~\cite{zhu2024vision}.
Additionally, results in Tab.~\ref{tab3} further illustrate that DTTN achieves the highest accuracy on small benchmarks. The ViP-Small/14~\cite{hou2022vision} with an MLP architecture and MONet exhibit slower convergence on ImageNet-100 compared to ResNet-50 (see Appendix B), which is due to their inductive bias.
In contrast, DTTN demonstrates remarkable training efficiency and superior accuracy relative to its counterparts.

\subsection{Broader Impact}

\noindent\textbf{Recommendation System}.
Given the critical role of feature interaction in recommendation systems~\cite{lian2018xdeepfm}, we select two widely-used datasets, Criteo and Avazu, to evaluate the effectiveness of AIM in enhancing feature interaction modeling. In our implementation, AIM is designed as a pluggable module and seamlessly integrated into existing Click-Through Rate (CTR) prediction models, including DeepFM, FiBiNet~\cite{huang2019fibinet}, and DCN-V2~\cite{wang2021dcn}\footnote{\url{https://github.com/shenweichen/DeepCTR-Torch}}. In this setup, AIM replaced all linear layers in the target models, with internal convolution operations removed, leaving only linear and normalization layers.
Experimental validation, reported in Tab.~\ref{tab5} and evaluated using AUC, shows that AIM consistently enhances performance across both datasets for all tested CTR models. Specifically, FiBiNet saw a 0.55\% improvement on the Criteo dataset.

\noindent\textbf{DTTN-based PDE Solver}.
Physics-Informed Neural Networks (PINNs)~\cite{cuomo2022scientific} incorporate physical constraints of systems into the training process, enabling the model to learn governing physical laws for describing system behaviors. The nonlinear activation functions serve as the critical component for PINNs to capture complex patterns.
Here, we apply DTTN to solve specific partial differential equations (PDEs), including both the linear case and the highly nonlinear Allen-Cahn equation (defined as $u_t=\epsilon\Delta u+u-u^3$, where $\epsilon$ controls interface width and $F(u)=u^3-u$ represents nonlinear reaction terms.
The two equations to be solved are as follows:
\begin{equation}\resizebox{.7\hsize}{!}{$\displaystyle\begin{split}
\left\{\begin{matrix}
\frac{\partial u}{\partial t} =-u\\
 u(x,t=0)=sin(\pi x)\\
u(1,t)=u(-1,t)=0\\
\end{matrix}\right.
\ \ \ 
\left\{\begin{matrix}
u_t-0.0001u_{xx}+5u^3-5u=0\\
 u(x,t=0)=x^2cos(\pi x)\\
u(1,t)=u(-1,t)=-1.\\
\end{matrix}\right.
\label{eqode}
\end{split}$}\end{equation}
We employ PINN~\cite{cuomo2022scientific} and Self-Adaptive PINN (SA-PINN~\cite{subramanian2023adaptive}) with hard boundary conditions to address these problems.
For comparison, we replace the linear layers with the AIM module to construct PINN-DTTN and SA-PINN-DTTN, which maintain identical configurations to their baselines but omit the Tanh nonlinearity.
Numerical results, including the average L2-error and the absolute error of the prediction in the spatial-temporal domain, are presented in Fig.~\ref{img5}. DTTN-based PINNs achieve lower prediction errors without activation nonlinearity.

\subsection{Ablation Study}
Here, we conduct ablation studies to evaluate the effectiveness of various design choices, including network depth and width, the antisymmetric design of AIM, and layer normalization.
We utilize ImageNet-100—a representative subset of ImageNet-1K—as our test benchmark, which is well-suited for model validation and hyperparameter tuning under limited computational resources.
Throughout these experiments, all hyperparameters are kept consistent with those of the final model, except for the variable under investigation. Each model was trained from scratch. We report the experimental results in Tables~\ref{tab7}, \ref{tab8}, and \ref{tab9}.

\noindent\textbf{Depth and Width}.
To control the depth and width of DTTN, we vary the number of AIM blocks and the hidden-size denoted as $L$ and $d$, for brevity. For ease of comparison, we kept $d$ fixed across different stages. As illustrated in Tab.~\ref{tab7}, we first set $L$ to $\{8, 16, 24, 32\}$ with $d=256$, and subsequently varied $d$ to $\{64, 128, 256, 512\}$ while fixing $L$ at 32.
This approach enables us to explore how network depth and width affect model performance. Notably, performance declines significantly as both $L$ and $d$ are reduced.
Specifically, Top-1 accuracy falls by 24.5\% when $d$ is reduced from 512 to 64.
Furthermore, the number of model parameters grows linearly with $L$ and quadratically with $d$. These observations provide indirect evidence for the effectiveness of DTTN's multi-stage design.

\noindent\textbf{Antisymmetrical Design of AIM}.
We evaluate the effectiveness of the antisymmetric design of AIM on the DTTN$^\dagger$-T and DTTN$^\dagger$-S architectures. As shown in Tab.~\ref{tab8}, we compare the impact of symmetric versus antisymmetric designs on model performance.
In this context, SIM-Conv and SIM-Linear refer to the Symmetric Interaction Module (SIM), which uses prioritized convolutional and linear layers, respectively, as alternatives to AIM.
We observe that DTTN$^\dagger$-T achieves the best results among its counterparts, whereas DTTN$^\dagger$-S performs only slightly worse than SIM-Conv (by 0.1\%), while using approximately 20\% fewer parameters.
Furthermore, the reduction in the number of parameters aligns with the theoretical predictions corresponding to Equation~\ref{eq3}.

\noindent\textbf{Importance of LN}.
Figure~\ref{img3}(d) and Table~\ref{tab2} highlight the essential role of the Layer Normalization (LN) layer in enhancing the network's representational capacity—a key technique for improving the performance of multilinear networks~\citep{chrysos2023regularization,cheng2024multilinear}.
Table~\ref{tab9} reports the improvements in DTTN variants after incorporating LN, showing gains of 1.8\%, 0.4\%, and 0.5\%, respectively.
Moreover, by comparing the results in Fig.~\ref{img2}(e) and Tab.~\ref{tab2}, we observe that DTTN-S achieves a Top-1 accuracy that is 5.6\% and 2.2\% lower than DTTN$^\dagger$-S after 90 and 300 training epochs, respectively.

Moreover, by combining the data from Figure \ref{img2}(e) and Tab.~\ref{tab2}, it can be seen that DTTN-S exhibits a Top-1 accuracy that is 5.6\% and 2.2\% lower than DTTN$^\dagger$-S after training for 90 and 300 epochs, respectively. This further underscores the importance of LN in boosting the performance and convergence of our proposed architecture.

\noindent\textbf{Limitations}.
Although we have conducted an extensive evaluation of DTTN's effectiveness and advantages, computational resource limitations prevented us from performing additional experiments.
These include pre-training on larger datasets like JFT-300M and evaluating model robustness.
Future research will focus on a theoretical analysis of the proposed model, as well as investigate multilinear transformer architectures tailored for large language models (LLMs).

\section{CONCLUSION}
This paper introduces a multilinear network architecture named DTTN, which bridges the gap between quantum-inspired TNs and advanced architectures, achieving this uniquely without activation nonlinearity. Specifically, DTTN employs a modular stacking design to capture exponential interactions among input features, essentially unfolding into a tree-structured TN. We conducted extensive experiments to showcase the effectiveness of DTTNs across various benchmarks.
Notably, this is the first validation of TNs' effectiveness on large-scale benchmarks, yielding competitive results compared to advanced architectures.
Additionally, we explored the broader applicability of DTTN in other domains, such as recommendation systems and solving PDEs.
Lastly, more theoretical discussions about DTTN will be addressed in future work.

\section{Acknowledgment}
We thank Yajie Chen and Junfang Chen for their valuable feedback and suggestions for improving this paper.

{
\bibliographystyle{plain}
\bibliography{reference}
}

\clearpage
\section*{Appendix}
\renewcommand{\thesubsection}{\Alph{subsection}}
\subsection{Experimental Setup}
\subsubsection{Image Classification Benchmarks and Training Settings}
A detailed description of the benchmarks and training configurations is provided below.
\begin{itemize}
\item CIFAR-10~\citep{krizhevsky2009learning} consists of 60K color images of $32\times 32$ resolution across 10 classes, with 50K images for training and 10K for testing. We train our model using the SGD optimizer with a batch size of 128 for 160 epochs. The MultiStepLR strategy is applied to adjust the learning rate, and data augmentation settings are in accordance with~\citep{chrysos2023regularization}.

\item Tiny ImageNet~\citep{le2015tiny}, ImageNet-100~\citep{yan2021dynamically}, and ImageNet-1K~\citep{russakovsky2015imagenet} contain 100k, 100k, and 1.2M color-annotated training images with resolutions of $64\times 64$, $224\times 224$, $224\times 224$ pixels, respectively.
These datasets serve as standard benchmarks for evaluating image recognition models.
During training, we optimized our model using a configuration consistent with ~\citep{cheng2024multilinear,tolstikhin2021mlp}\footnote{\url{https://github.com/Allencheng97/Multilinear\_Operator\_Networks}}. Specifically, we utilized the AdamW optimizer~\citep{loshchilov2017decoupled} alongside a cosine decay schedule for learning rate tuning, and applied data augmentations including label smoothing, Cut-Mix, Mix-Up, and AutoAugment. Note that we did not employ additional large-scale datasets such as JFT-300M for pre-training, nor did we use unsupervised or semi-supervised methods to optimize our model. All experiments were conducted on 8 GeForce RTX 3090 GPUs using native PyTorch.

\item MNIST and Fashion-MNIST~\citep{xiao2017fashion} both contain 60,000 grayscale images for training and 10,000 for validation, with each image sized at $28\times 28$. We conducted comparison experiments between DTTN and other TN models on these two benchmarks, employing training configurations consistent with those used for CIFAR-10.
\end{itemize}

\subsubsection{Hyperparameters for Training on ImageNet-1K}
Table~\ref{tab10} shows the experimental hyperparameters used for training the DTTN family on the ImageNet-1K benchmark. We utilize the timm library\footnote{\url{https://github.com/huggingface/pytorch-image-models}} and ensure all settings are aligned with the comparison method MONet~\citep{cheng2024multilinear}.

\subsection{More Experimental Results}
\noindent\textbf{Image Segmentation}.
We employ the Semantic FPN framework~\cite{kirillov2019panoptic}\footnote{\url{https://github.com/CSAILVision/semantic-segmentation-pytorch}} for performing semantic segmentation on the ADE20K dataset, which includes 20,000 training images and 2,000 validation images. DTTN$^\dagger$-S and DTTN$^\dagger$-L were initialized using pre-trained ImageNet-1K weights before being integrated as the backbone of the framework. Additionally, all newly added layer weights were initialized using Xavier initialization~\cite{glorot2010understanding}. 
Table~\ref{tab6} presents the outcomes of the DTTN model trained for 12 epochs using the AdamW optimizer, evaluated by Mean Intersection over Union (mIoU). Some experimental data are derived from~\cite{cheng2024multilinear}. Notably, DTTN$^\dagger$-L achieves better performance over other multilinear networks, underscoring the efficacy of our designed AIM.

\noindent\textbf{Training Convergence}. Figure~\ref{img4} displays the Top-1 accuracy and loss curves of the proposed model alongside other architectures—such as ResNet-50~\cite{ref31}, ViP-Small~\cite{hou2022vision} representing the MLP architecture, and MONet~\cite{cheng2024multilinear}—all trained on ImageNet-100 for 90 epochs. It can be observed that DTTN achieves significantly faster convergence and superior performance.

\begin{table*}[ht]
\hspace{10mm}
\begin{minipage}[t]{0.35\textwidth}
\vspace{-40mm}
\centering
\begin{tabular}{l|c}
\toprule
\textbf{BackBone} & \textbf{mIoU}(\%)  \\
\midrule
Resnet18 & 32.9\\
FCN & 29.3\\
Seg-Former & 37.4\\
\midrule
R-PDC & 20.7\\
$\mathcal{R}$-PolyNets & 19.9\\
MONet-S & 37.5\\ 
\midrule
DTTN$^\dagger$-S  & 36.9\\
DTTN$^\dagger$-L  & \textbf{38.6}\\
\bottomrule
\end{tabular}
\captionof{table}{Experimental validation of semantic segmentation on ADK20K with Semantic FPN.}
\label{tab6}
\end{minipage}
\hspace{5mm}
\begin{minipage}[t]{0.45\textwidth}
\includegraphics[width=\linewidth]{./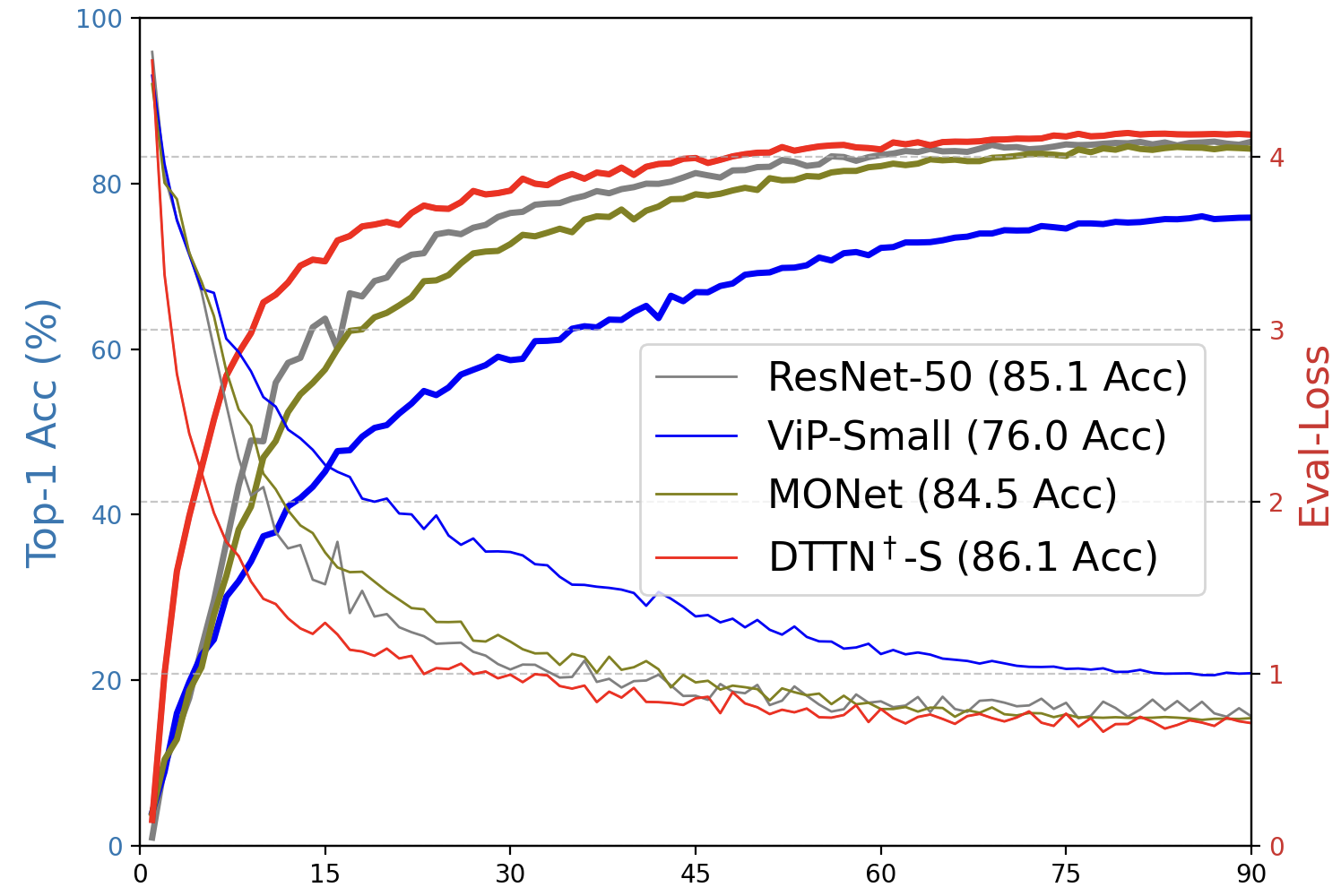} 
\captionof{figure}{Top-1 accuracy and loss visualization for architectures trained from scratch on ImageNet-100.}
\label{img4}
\end{minipage}
\end{table*}

\subsection{Motivation and Interpretability}
The principal motivation of this work is to uncover both the potential and inherent limitations of quantum-inspired tensor networks (TNs) on large-scale benchmarks, by establishing their equivalence with DTTN through Theorem 1. Although TNs have been extensively studied in the fields of quantum mechanics and white-box machine learning, state-of-the-art models are still largely restricted to small datasets and limited tasks. DTTN overcomes this barrier by extending Tensor Network Networks (TNNs) to large-scale applications through techniques such as parameter sharing and exponential interaction modeling, thereby achieving competitive performance.

Furthermore, DTTNs retain a high degree of interpretability that aligns with quantum-inspired TNs, offering intuitive deterministic linear representations and probabilistic interpretations. For a deeper understanding of the theoretical underpinnings, readers are referred to related works~\cite{ran2023tensor,ref12}.

\begin{table*}[h]
\begin{minipage}[t]{0.9\textwidth}
\centering
\caption{Comparison of inference latency, throughput, and memory usage on an NVIDIA A6000 GPU.}
\label{tab-lat}
\resizebox{1.0\textwidth}{!}{
\begin{tabular}{lcccc}
\toprule
\textbf{Model} & \textbf{Batch Size} & \textbf{Latency  (ms/batch)} & \textbf{Throughput (sample/sec)} & \textbf{Peak Memory (GB)} \\
\midrule
MONet\_T & 64 &  128.39 & 498.47  & 1.14 \\
ViP-Small/14 (ours) & 64 &  86.1 & 743.28  & 0.584 \\
\midrule
DTTN-S (ours) & 64 &  81.94 & 781.08  & 1.099 \\
\bottomrule
\end{tabular}}
\end{minipage}
\end{table*}

\subsection{Latency Analysis}
Table~\ref{tab-lat} presents a detailed comparison of inference latency and memory usage, benchmarked on an NVIDIA A6000 GPU. Compared to the previous state-of-the-art model, MONet-T, our DTTN-S reduces latency by nearly 35\% while also achieving a 2.4\% performance gain on ImageNet (see Table 1). Furthermore, DTTN demonstrates faster convergence on both the ImageNet and ImageNet-100 benchmarks, as shown by the convergence curves in Fig. 2e and Fig. 5.

\begin{table}[h]
\centering
\caption{Training settings for ImageNet-1K in Section 4.1}
\begin{tabular}{@{}cc@{}}
\toprule
\textbf{Item} & \textbf{Setting} \\ \midrule
Optimizer          & AdamW              \\
Base learning rate     & 1e-3            \\
Warmup-lr     & 1e-6            \\
Learning rate schedule & cosine \\
Weight Decay       & 0.01   $\&$ 0.02            \\
Batch size      &  $320\times4$ GPU           \\
Label smoothing       & 0.1         \\
Auto augmentation      & \checkmark         \\
Random erase      & 0.1         \\
Cutmix      & 0.5         \\
Mixup      & 0.5         \\
Dropout   & 0.0          \\
\bottomrule
\end{tabular}
\label{tab10}
\end{table}

\subsection{Details of DTTN Resolving PDEs}
In this study, we use the Physics-Informed Neural Network (PINN)~\cite{cuomo2022scientific} and the Self-Adaptive PINN (SA-PINN)~\cite{subramanian2023adaptive} as our baseline models. These are multi-layer feedforward networks with a hidden dimension of $d=32$. Each linear layer is succeeded by a $\tanh$ activation function\footnote{\url{https://github.com/ZzYyPp47/Attention_pinn}}. 
Subsequently, we replaced these linear layers with AIM modules while removing the activation functions, yielding the novel architectures PINN-DTTN and SA-PINN-DTTN. These enhanced networks are then applied to solve both the linear PDE and the nonlinear Allen-Cahn equation presented in Equation~(\ref{eq7}).
As noted in reference~\cite{subramanian2023adaptive}, the Allen-Cahn PDE serves as an intriguing benchmark for testing PINNs, requiring the network to approximate solutions with sharp transitions in space and time, along with periodic boundary conditions. 

Fig.~\ref{img6} illustrates the prediction results and absolute errors for various PINN variants addressing the two equations across the spatial-temporal domain $\Omega \times T \rightarrow [-1,1] \times [0,1]$. For the first equation, which has an analytical solution of $u=e^{-t}\sin(\pi x)$, the PINN-DTTN exhibits lower errors in both training and prediction. However, for the second equation, despite the SA-PINN-DTTN achieving lower errors, there are noticeable spikes in training loss. This phenomenon may be due to instability when fitting complex functions with simple polynomials.

\begin{figure*}[t]
\centering
\includegraphics[width=.85\textwidth]{./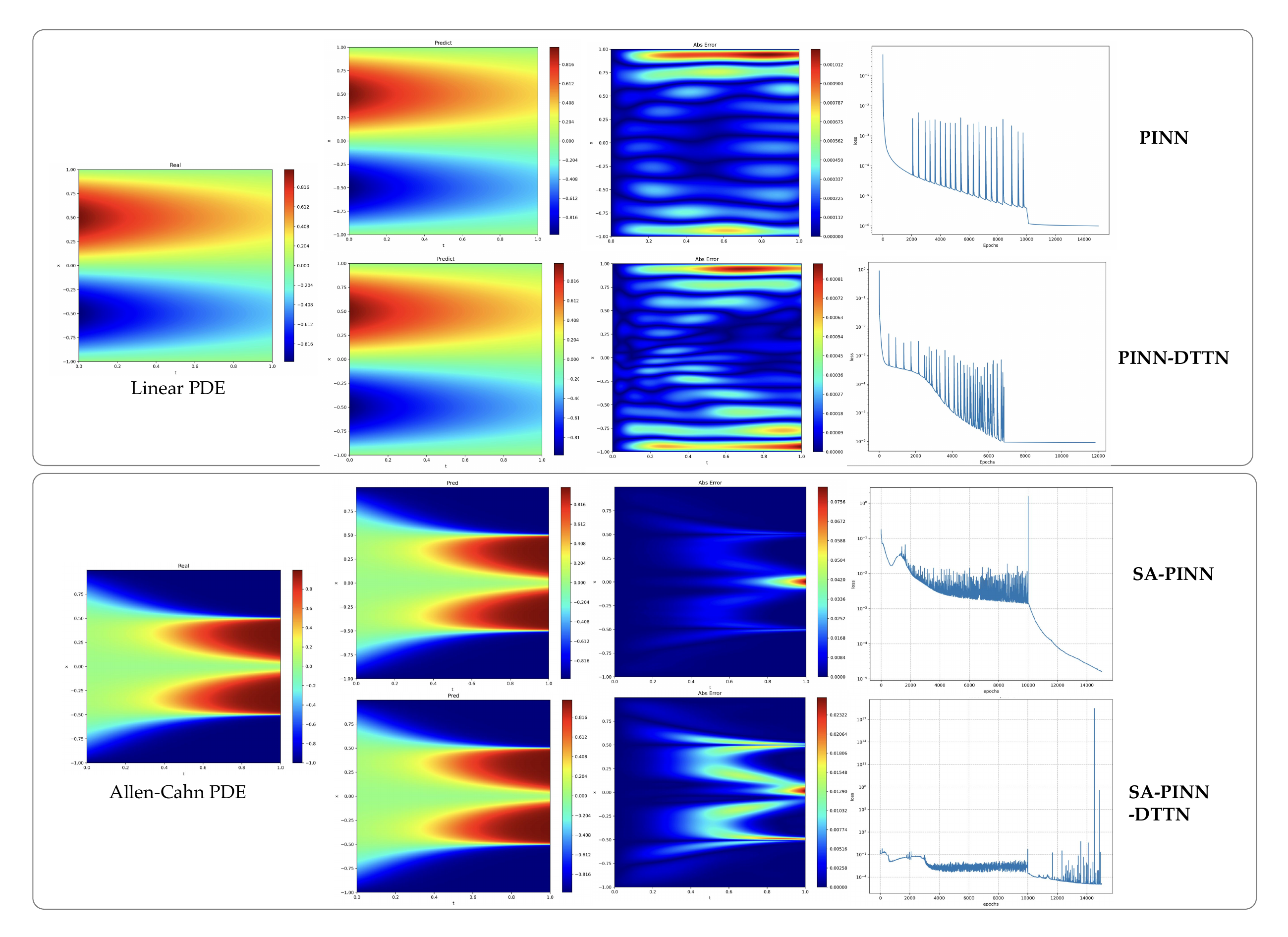} 
\caption{Visualization of training loss and prediction results of PINNs solving PDEs across the spatial-temporal domain.
}
\label{img6}
\end{figure*}

\subsection{Implementation and Motivation of the AIM Design}
In Algorithm~\ref{alg1:aim}, we provide a PyTorch-style implementation of AIM. The main difference between training and inference lies in structural re-parameterization. Specifically, during training, BN layers are integrated with adjacent convolutional or linear layers. During inference, the AIM propagation process simplifies, matching the schematic shown in Fig.~\ref{img2}(d).
\begin{algorithm*}
\caption{Code for AIM (PyTorch-like)}
\label{alg1:aim}
\begin{lstlisting}
# H: height, W: width, C: channel, R: expansion ratio
# x: input tensor of shape (B, C, H, W)
############## Initialization ##############
proj_l = nn.Linear(C, C * R) # Aggregate channel information
conv_l = nn.Conv2d(C * R, C * R, kernel_size=3, groups=C * R) # Aggregate spatial information
proj_r = nn.Linear(C * R, C * R) # Aggregate channel information
conv_r = nn.Conv2d(C, C * R, kernel_size=3, groups=C) # Aggregate spatial information
proj = nn.Conv2d(C * R, C, kernel_size=1) # For information fusion
ln_norm = nn.LayerNorm([C * R]) # Layer normalization
l_norm = nn.BatchNorm2d(C * R) # Batch normalization
r_norm = nn.BatchNorm2d(C * R) # Batch normalization
res_norm = nn.BatchNorm2d(C) # Batch normalization
scale = nn.Parameter(torch.ones(1))
############## Training stage ##############
def AIM(x, use_ln):
    x_l = conv_l(proj_l(x.permute(0, 2, 3, 1).permute(0, 3, 1, 2)))
    x_r = proj_r(conv_r(x).permute(0, 2, 3, 1)).permute(0, 3, 1, 2)
    if use_ln:
        out = ln_norm(x_l * x_r)
    else:
        out = l_norm(x_l) * r_norm(x_r)
    out = res_norm(proj(out))
    return x + scale * out
############## Inference stage ##############
def AIM(x, use_ln):
    x_l = conv_l(proj_l(x.permute(0, 2, 3, 1).permute(0, 3, 1, 2)))
    x_r = proj_r(conv_r(x).permute(0, 2, 3, 1)).permute(0, 3, 1, 2)
    out = x_l * x_r
    if use_ln:
        out = ln_norm()
    return x + proj(out)
\end{lstlisting}
\end{algorithm*}

The primary motivation for designing DTTN was to develop a computationally efficient model, based solely on tensor operations, that is capable of scaling to large benchmarks. The core of our solution is the AIM, which is designed to facilitate direct multiplicative interactions. A tree structure emerges as a natural consequence of this design: since each AIM module performs a binary fusion of two feature tensors into one, stacking these modules hierarchically inherently constructs a tree. 
Significantly, this straightforward, hierarchical construction is provably equivalent to a powerful Tree Tensor Network (TTN), bypassing the complex methods that have historically limited their application~\cite{cheng2019tree}. Thus, while the tree topology was not an initial design constraint, it is fundamental to our model's expressive power and theoretical grounding.

\paragraph{Training Stability of DTTN.}
The training of traditional TNs is often hindered by well-known challenges, such as gradient instability (i.e., exploding or vanishing gradients) and high sensitivity to hyperparameters. In contrast, our DTTN model trains stably across all scenarios, showing no signs of these issues, as evidenced by the smooth loss curves in Fig.~\ref{img4}. 
This stability provides a significant advantage over traditional TNs, where long chains of sequential tensor contractions can lead to optimization problems. Although DTTN is theoretically equivalent to a TTN, its hierarchical optimization process mirrors that of a standard deep network, ensuring a more robust and stable training dynamic.

\subsection{Complexity Analysis}
We now analyze the complexity of DTTN in terms of its storage requirements and computational cost.
Without loss of generality, we consider the multi-stage DTTN with hidden-size $d$ for each stage, and the total depth is $L$.
We then consider the complexity of the main components of the network, including the local mapping, the core blocks, and the classification head separately.

In this section, we analyze the complexity of a DTTN, focusing on both storage requirements and computational cost. Without loss of generality, we consider a multi-stage DTTN where each stage has a hidden size of $d$, and the overall depth is $L$. We then examine the complexity associated with the main components of the network: the local mapping function, the core blocks, and the classification head.

\noindent\textbf{Local Mapping}.
The local mapping function $\phi(\cdot)$ consists of two consecutive convolutional layers, each with a kernel size and stride of 2.
This function is applied to the input image $\boldsymbol{x}$ to produce a feature map $\phi(\boldsymbol{\boldsymbol{x},\boldsymbol{\Lambda}_\phi})\in\mathbb{R}^{W\times H\times C}$. Here, $\boldsymbol{\Lambda}_\phi\in\mathbb{R}^{S^2\times C}$ represents a learnable matrix, where $S$,$C=d\in\mathbb{N}$ denote the local patch size and the number of output channels, respectively.
The parameters and floating-point operations (FLOPs) involved in this process are given by:
\begin{equation}\begin{split}
Params^L&=\mathcal{O}(14\cdot d+4\cdot d^2),\\
FLOPs^L&=\mathcal{O}(48\cdot d\cdot WH+4\cdot d^2\cdot  WH).
\label{eq7}
\end{split}\end{equation}

\noindent\textbf{Core Blocks}.
we analyze the complexity associated with the core AIM blocks within the DTTN architecture. Assuming that each of the four stages contains an equal number of blocks, the corresponding feature map sizes are $W\times H$,$\frac{W}{2}\times \frac{H}{2}$, $\frac{W}{4}\times \frac{H}{4}$, and $\frac{W}{8}\times \frac{H}{8}$, respectively.
Our analysis focuses on the parameters and FLOPs involved in the three linear layers and two convolutional layers of AIM.
The total parameters and FLOPs can be expressed as follows:
\begin{equation}\resizebox{.7\hsize}{!}{$\displaystyle\begin{split}
Params^{AIM}=&\mathcal{O}\left ((\sum_{s=0}^3 22\cdot r_{exp}\cdot d + (2\cdot r_{exp}+ r_{exp}^2)\cdot d^2+ d)\cdot \frac{L}{4} \right )\\
=&\mathcal{O}\left (( 22\cdot r_{exp} + 2\cdot(r_{exp}+ r_{exp}^2)\cdot d + 1)\cdot d\cdot L\right ),\\
FLOPs^{AIM}=&\mathcal{O}\left ((\sum_{s=0}^3 18\cdot r_{exp}\cdot d\cdot \frac{WH}{4^s} + (2\cdot r_{exp}+ r_{exp}^2)\cdot d^2 \cdot \frac{WH}{4^s} )\cdot \frac{L}{4}\right )\\
=&\mathcal{O}\left ((\frac{765}{128}\cdot r_{exp} + \frac{85}{128}\cdot(r_{exp}+ r_{exp}^2)\cdot d)\cdot d\cdot WH\cdot L \right ).
\label{eq8}
\end{split}$}\end{equation}

\noindent\textbf{Classification Head}.
For a classification head with $m$ classes, we receive a feature map of size $\frac{W}{8}\times \frac{H}{8}\times d$, which is then processed through an average pooling layer followed by a fully-connected layer to output an $m$-dimensional vector. The parameters and FLOPs involved in this process are given by:
\begin{equation}\begin{split}
Params^{H}&=\mathcal{O}(d\cdot (m+1)),\\
FLOPs^{H}&=\mathcal{O}(\frac{WH}{64}\cdot d+m\cdot d).
\label{eq9}
\end{split}\end{equation}
In summary, our conclusions regarding the number of parameters and FLOPs for the DTTN architecture can be expressed as:
\begin{equation}\begin{split}
Params&=Params^L+Params^{AIM}+Params^{H},\\
FLOPs&=FLOPs^L+FLOPs^{AIM}+FLOPs^{H}.
\label{eq10}
\end{split}\end{equation}
It can be observed that the number of parameters in the DTTN architecture remains fixed, while the computational complexity exhibits a linear relationship with the input image scale. This characteristic represents a significant advantage of the DTTN over modern MLP and Transformer architectures, which typically exhibit quadratic complexity.

\subsection{Proofs}
Here, we derive the proofs of Proposition 1 and Theorem 1 from the main paper and further elucidate their significance.

\begin{proposition}
The DTTN has the capability to capture $2^L$ multiplicative interactions among input elements, which can be represented in the format of Equation~\ref{eq1} as $\Phi(\boldsymbol{x})=\otimes^{2^L}\phi(\boldsymbol{x},\boldsymbol{\Lambda}_\phi)$.
Consequently, the elements of $f(\boldsymbol{x})$ are homogeneous polynomials of degree $2^L$ over the feature map $\phi(\boldsymbol{x},\boldsymbol{\Lambda}_\phi)$.
\end{proposition}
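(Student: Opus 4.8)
The plan is to argue by induction on the depth $l \in \mathbb{K}_L$ that, when no layer normalization is present, the vectorized output $\boldsymbol{x}^{l}$ of the $l$-th AIM block is a (vector of) homogeneous polynomials of degree exactly $2^{l}$ in the entries of the patch-embedding feature map $\phi(\boldsymbol{x},\boldsymbol{\Lambda}_\phi)$. The base case is the input to the first block: after augmenting the feature map with a constant entry (the bias trick already invoked for Equations~\ref{eq1} and~\ref{eq6}), $\boldsymbol{x}^{1}$ is linear, hence homogeneous of degree $2^{0}=1$, in $\phi(\boldsymbol{x},\boldsymbol{\Lambda}_\phi)$.

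For the inductive step, I would use the closed form of the AIM forward map derived in Equation~\ref{eq5}, namely $\boldsymbol{x}^{l+1} = \boldsymbol{x}^{l} + \boldsymbol{\mathcal{C}}^{l}\times_{2,3}^{1,2}(\boldsymbol{x}^{l}\otimes\boldsymbol{x}^{l})$. The key observation is that each coordinate of $\boldsymbol{x}^{l}\otimes\boldsymbol{x}^{l}$ is a product of two coordinates of $\boldsymbol{x}^{l}$, so if every coordinate of $\boldsymbol{x}^{l}$ is homogeneous of degree $2^{l}$ then every coordinate of the tensor-product term is homogeneous of degree $2^{l}+2^{l}=2^{l+1}$; contraction against the fixed tensor $\boldsymbol{\mathcal{C}}^{l}$ is a linear combination and preserves homogeneity. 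To make the residual term $\boldsymbol{x}^{l}$ compatible, I would note that the degree-$2^{l}$ part can be promoted to degree $2^{l+1}$ using the constant ($=1$) coordinate introduced in the base case — multiplying by powers of that coordinate leaves the numerical value unchanged but raises the formal degree — so $\boldsymbol{x}^{l+1}$ is homogeneous of degree $2^{l+1}$. Iterating to $l=L$ gives that every coordinate of $\boldsymbol{x}^{L}$, and hence (after the linear classification head, which again preserves homogeneity) every coordinate of $f(\boldsymbol{x})$, is a homogeneous polynomial of degree $2^{L}$ in $\phi(\boldsymbol{x},\boldsymbol{\Lambda}_\phi)$.

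Finally, to recover the stated joint-feature-map form $\Phi(\boldsymbol{x})=\otimes^{2^{L}}\phi(\boldsymbol{x},\boldsymbol{\Lambda}_\phi)$, I would collect the monomials: any homogeneous polynomial of degree $2^{L}$ in the vector $\phi$ is a linear functional applied to the $2^{L}$-fold tensor power $\otimes^{2^{L}}\phi$, simply because the entries of $\otimes^{2^{L}}\phi$ enumerate (with multiplicity) all degree-$2^{L}$ monomials in the entries of $\phi$. Packaging the accumulated linear combination across all $L$ blocks and the head into a single $(N+1)$-order coefficient tensor $\boldsymbol{W}^{m}$ then matches Equation~\ref{eq1} with $\Phi(\boldsymbol{x})=\otimes^{2^{L}}\phi(\boldsymbol{x},\boldsymbol{\Lambda}_\phi)$.

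The main obstacle I anticipate is bookkeeping the residual connections: because $\boldsymbol{x}^{l+1}$ adds the lower-degree term $\boldsymbol{x}^{l}$ to a degree-$2^{l+1}$ term, the map is literally degree-$2^{l+1}$ only after the constant-coordinate homogenization trick, and one must check that this trick threads consistently through all $L$ layers and the spatial/depthwise-convolution and linear sublayers inside each AIM (all of which are linear in their input, hence harmless). A secondary point to state carefully is the caveat already flagged in the text: the argument uses that each sublayer is linear, which fails once an LN is inserted before the Hadamard product — there the degree collapses and only the weaker "ordinary multilinear network" description survives, consistent with Equation~\ref{eq2}.
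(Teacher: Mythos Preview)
Your proposal is correct and follows essentially the same route as the paper: induct on depth using the closed form $\boldsymbol{x}^{l+1}=\boldsymbol{x}^{l}+\boldsymbol{\mathcal{C}}^{l}\times_{2,3}^{1,2}(\boldsymbol{x}^{l}\otimes\boldsymbol{x}^{l})$ from Equation~\ref{eq5}, then absorb the residual/bias terms by appending a constant coordinate to the feature map so that each block exactly doubles the homogeneous degree. The paper's proof is terser and does not spell out the final repackaging into $\otimes^{2^{L}}\phi$ as explicitly as you do, but the argument and the homogenization trick are the same.
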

\begin{proof}
For each AIM block with input $\boldsymbol{x}^{l}=vec(\boldsymbol{\mathcal{X}}^l)\in\mathbb{R}^{W_l H_l C_l}$, let $D^l=W_l\times H_l\times C_l$ and $l \in \mathbb{K}_{L}$. Suppose that the left and right branches of AIM can be represented as $f_1^l(\boldsymbol{x}^{l})=\boldsymbol{A}_1^l\boldsymbol{x}^{l}$ and $f_2^l(\boldsymbol{x}^{l})=\boldsymbol{A}_2^l\boldsymbol{x}^{l}$, where $\boldsymbol{A}_1^l, \boldsymbol{A}_2^l\in\mathbb{R}^{D^l\times D^l}$ are obtained through structured combinations of convolutional and linear layer weights. The feedforward propagation of the AIM block can then be expressed as:
\begin{equation}\begin{split}
\boldsymbol{x}^{l+1}&=\boldsymbol{x}^{l}+\boldsymbol{B}^{l} \left((\boldsymbol{A}_1^{l}\boldsymbol{x}^{l}) * (\boldsymbol{A}_2^{l}\boldsymbol{x}^{l})\right)\\
&=\boldsymbol{x}^{l}+\mathtt{Reshape}\left(  \boldsymbol{B}^{l} (\boldsymbol{A}_1^{l\ ^T}\odot \boldsymbol{A}_2^{l\ ^T})^T\right)\times_{2,3}^{1,2}(\boldsymbol{x}^{l}\otimes\boldsymbol{x}^{l})\\
&=\boldsymbol{x}^{l}+\boldsymbol{\mathcal{Z}}^l\times_{2,3}^{1,2}(\boldsymbol{x}^{l}\otimes\boldsymbol{x}^{l}).
\label{eq11}
\end{split}\end{equation}
where $\boldsymbol{B}^{l}$ denotes the fused linear layer inside AIM, and $\boldsymbol{\mathcal{Z}}^l=\mathtt{Reshape}\left(  \boldsymbol{B}^{l} (\boldsymbol{A}_1^{l\ ^T}\odot \boldsymbol{A}_2^{l\ ^T})^T\right)\in\mathbb{R}^{D^l\times D^l\times D^l}$ is a structured learnable tensor.
Then each element of $\boldsymbol{x}^{l+1}$ can be calculated by
\begin{equation}\begin{split}
\boldsymbol{x}^{l+1}_\tau&=\boldsymbol{x}^{l}_\tau+\sum_{w}^{D^l}\sum_{\rho}^{D^l}\boldsymbol{\mathcal{Z}}^l_{(w,\rho,\tau)}\boldsymbol{x}^{l}_w\boldsymbol{x}^{l}_\rho.
\label{eq12}
\end{split}\end{equation}
Thus, each AIM module captures second-order multiplicative feature interactions of the input. By induction, the DTTN stacked with $L$ AIMs captures $2^L$ interactions of the input $\boldsymbol{x}$.
Note that the network's bias terms and shortcut connections can be eliminated by introducing an additional homogeneous dimension in the local mapping.
Hence, we have $\boldsymbol{x}^{l+1}_\tau=\sum_{w}^{D^l+1}\sum_{\rho}^{D^l+1}\boldsymbol{\mathcal{Z}}^{*l}_{(w,\rho,\tau)}\boldsymbol{x}^{l}_w\boldsymbol{x}^{l}_\rho\in\mathbb{R}^{D_l+1}.$
Therefore, the expression $f(\boldsymbol{x})$ is a homogeneous polynomial of degree $2^L$ of $\phi(\boldsymbol{x},\boldsymbol{\Lambda}_\phi)$, which concludes our proof.
\end{proof}

\begin{theorem}
Given the local mapping function $\phi^{i_1}(x_1)=[x_1^0,\cdots,x_1^{2^L}]^T$, a polynomial network with the expansion form of Equation~(\ref{eq6}) can be transformed into a quantum-inspired TN model with finite bond dimension.
\end{theorem}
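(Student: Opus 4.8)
The plan is to start from the polynomial expansion in Equation~\ref{eq6} and exhibit an explicit tensor-network factorization of each coefficient tensor $\mathcal{W}^{[l]}$ whose internal (bond) indices are bounded. The crucial observation is that the prescribed local map $\phi^{i_1}(x_1)=[x_1^0,x_1^1,\dots,x_1^{2^L}]^T\in\mathbb{R}^{2^L+1}$ is a Vandermonde-type feature vector, so that a \emph{single} site carrying this map already encodes all powers of $x_1$ up to degree $2^L$. Consequently the full joint map $\Phi(\boldsymbol{x})=\bigotimes_{n=1}^{N}\phi(x_n)$ lives in a space of dimension $(2^L+1)^N$, and any homogeneous-or-lower polynomial of total degree $2^L$ in the entries $\phi(x_n,\boldsymbol\Lambda_\phi)$ — in particular each term $\mathcal{W}^{[l]}\times(\otimes^l\phi(x))$ — is a linear functional on this space, i.e.\ can be written as $\boldsymbol{W}^m*\Phi(\boldsymbol{x})$ for a suitable $(N+1)$-th order tensor $\boldsymbol{W}^m$ as in Equation~\ref{eq1}. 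First I would make this reduction precise: absorb the bias term $\beta$ by the homogeneous-coordinate trick already used in the proof of Proposition~1, collect $\sum_{l=1}^{2^L}\mathcal{W}^{[l]}\times(\otimes^l\phi(x))+\beta$ into one big coefficient tensor $\boldsymbol{W}$ acting on $\Phi(\boldsymbol{x})$, and note the degree bound guarantees $\boldsymbol{W}$ has a well-defined finite-dimensional index at every site.

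Next I would produce the TN decomposition of $\boldsymbol{W}$ with controlled bond dimension. The standard route is a sequential SVD / TT-sweep: reshape $\boldsymbol{W}\in\mathbb{R}^{(2^L+1)^N\times m}$ by grouping the first site against the remaining ones, take a (thin) singular value decomposition, keep the left factor as the first core, push the singular values and right factor rightward, and iterate. This yields an MPS-type network whose bond dimension at the cut between sites $1{:}k$ and $k{+}1{:}N$ equals the rank of the corresponding matricization, which is trivially at most $\min\{(2^L+1)^k,\,(2^L+1)^{N-k}m\}$ — a finite number depending only on $L$, $N$, $m$. To get a genuinely \emph{small} (i.e.\ not merely finite) bond one invokes the structure of $\boldsymbol{W}$ coming from the DTTN unfolding: by Proposition~1 and the tree discussion around Equation~\ref{eq5}, $\boldsymbol{W}$ is itself built from $L$ levels of degree-doubling contractions of the $3$-order cores $\boldsymbol{\mathcal{C}}^l$, so a tree/MERA contraction order (rather than a linear MPS sweep) keeps each intermediate rank polynomial in the per-block hidden sizes $D^l$. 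I would then remark, as the excerpt already anticipates, that any core can be split via HOSVD into a ``core $\&$ diagonal factor $\&$ core'' sandwich and the diagonal/connectivity pieces reabsorbed, so the resulting object is of the same type as the quantum-inspired TNs (MPS/TTN) used for image classification, completing the transformation.

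The last step is bookkeeping: verify that the contraction of the constructed TN against $\Phi(\boldsymbol{x})=\bigotimes_n\phi(x_n)$ reproduces $f(\boldsymbol{x})$ of Equation~\ref{eq6} entrywise, which is immediate once the reshapings are tracked carefully, and state the explicit bound on the bond dimension. I expect the main obstacle to be precisely this rank/bond-dimension accounting: showing that the multilinear, tree-structured coefficient tensor does not incur an exponential-in-$N$ bond when contracted in the natural order, and cleanly separating the ``finite but possibly large'' claim (true by brute-force SVD) from the ``small, physically meaningful bond'' claim (true only with the tree contraction schedule and the HOSVD merging argument). Care is also needed because the local map has dimension $2^L+1$ growing with depth, so the per-site physical dimension is not a universal constant; I would handle this by noting that in the decomposition one is free to truncate each physical leg to the at most $2^L+1$ monomials that actually appear, and that this truncation is exact (no approximation) because $f$ has degree at most $2^L$ in each $x_n$.
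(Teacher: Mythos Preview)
Your proposal is correct and rests on the same core observation as the paper: with the Vandermonde-type local map $\phi(x_i)=[x_i^0,\dots,x_i^{2^L}]^T$, every monomial $x_1^{k_1}\cdots x_N^{k_N}$ of total degree at most $2^L$ is a coordinate of $\Phi(\boldsymbol{x})=\bigotimes_n\phi(x_n)$, so the polynomial network output in Equation~\ref{eq6} becomes a fixed linear functional $\boldsymbol{W}^m*\Phi(\boldsymbol{x})$ on a finite-dimensional tensor product space, hence a TN with finite bond dimension.

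The execution differs. The paper's proof is purely existential and very short: it rewrites each output coordinate as $\sum_\xi w_\xi\,x_1^{k_1}\cdots x_N^{k_N}$ with $k_1+\cdots+k_N=2^L$ (Equations~\ref{eq15}--\ref{eq16}), observes that the contracted TN of Equation~\ref{eq14} ``encompasses each term'' of this sum, and stops there---no explicit decomposition, no bond bound beyond the implicit ``everything is finite''. Your route is constructive: you build $\boldsymbol{W}$ explicitly, run a TT-SVD sweep to obtain cores with the textbook rank bound $\min\{(2^L+1)^k,(2^L+1)^{N-k}m\}$, and then argue that the tree structure from Equation~\ref{eq5} plus HOSVD merging keeps the bond polynomial in the hidden sizes rather than exponential in $N$. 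What you gain is an actual algorithm and a quantitative bond estimate; what the paper gains is brevity, since the theorem only asserts \emph{finite} bond dimension, for which the one-line function-space inclusion suffices. Your extended discussion of ``small vs.\ merely finite'' bond and the per-site truncation is sound but goes beyond what the stated theorem requires.
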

\begin{proof}
For an input image $x\in\mathbb{R}^{W\times H\times C}$, the computation in the vanilla quantum-inspired TN model can be expressed as
\begin{equation}\begin{split}
f(\boldsymbol{x})=TN\left (\{\phi^{i}(\boldsymbol{x}_{(\tau,\rho,w)}) \}_{i=1}^{N}, \{\boldsymbol{\mathcal{R}}_i\}_{i=1}^{N}\right ),
\label{eq13}
\end{split}\end{equation}
where $N=WHC$,$\tau\in\mathbb{K}_{W}$,$\rho\in\mathbb{K}_{H}$,$w\in\mathbb{K}_{C}$, and $\{\mathcal{R_i}\}_{i=1}^{N}\}$ denotes the tensor cores.
$TN(\cdot):\mathbb{R}^{\underbrace{(2^L+1)\cdots (2^L+1)}_{N}}\rightarrow \mathbb{R}^{m}$ represents the contraction operation that outputs an $m$-dimensional vector.
We can further complete the contraction of $2^L$ physical indices and transform Equation~\ref{eq13} into
\begin{equation}\begin{split}
f(\boldsymbol{x})&=TN\left ( \phi^{1}(\boldsymbol{x}_{(\tau,\rho,w)})\times_{1}^1 \boldsymbol{\mathcal{R}}_1,\cdots,  \phi^{N}(\boldsymbol{x}_{(\tau,\rho,w)})\times_{1}^1 \boldsymbol{\mathcal{R}}_N \right )\\
&=TN(\boldsymbol{\mathcal{Z}}_1,\cdots,\boldsymbol{\mathcal{Z}}_N).
\label{eq14}
\end{split}\end{equation}
Since each output element of a polynomial network can be expressed as
\begin{equation}\begin{split}
 \sum_{i_1}^N\cdots \sum_{i_{2^L}}^N w_{\xi}  x_{i_1}x_{i_2}\cdots x_{i_{2^L}}, \ \ \xi\leq N^{2^L}.
\label{eq15}
\end{split}\end{equation}
The above equation is equivalent to
\begin{equation}\begin{split}
 \sum_{\xi}w_{\xi}x_1^{k_1}\cdots x_N^{k_N} \ \ \ \ \ 
 where\ \ k_i \geq 0 \ \ and \ \ \ k_1+\cdots + k_N=2^L
\label{eq16}
\end{split}\end{equation}
Since Equation~\ref{eq14} encompasses each term of Equation~\ref{eq16}, this proves our claim.
\end{proof}

The above proof indicates that by selecting an appropriate local mapping, the DTTN can be transformed into a standard tensor network model. We re-emphasize that this is the first work demonstrating that tensor networks can achieve competitive performance on large-scale benchmarks, such as attaining 77.2\% Top-1 accuracy on ImageNet-1K. Previous TNs have been limited to much smaller tasks due to expression limitations and lower bond dimensions.
We hope that this research will inspire further explorations into tensor networks.





\newpage
\section*{NeurIPS Paper Checklist}

\begin{enumerate}

\item {\bf Claims}
    \item[] Question: Do the main claims made in the abstract and introduction accurately reflect the paper's contributions and scope?
    \item[] Answer: \answerYes{} 
    \item[] Justification: {The contributions and scope of this paper are summarized in the abstract and detailed in the introduction. Specifically, the scope of the study is outlined in the first two paragraphs of the introduction, while the contributions are highlighted in the final paragraph.}
    \item[] Guidelines:
    \begin{itemize}
        \item The answer NA means that the abstract and introduction do not include the claims made in the paper.
        \item The abstract and/or introduction should clearly state the claims made, including the contributions made in the paper and important assumptions and limitations. A No or NA answer to this question will not be perceived well by the reviewers. 
        \item The claims made should match theoretical and experimental results, and reflect how much the results can be expected to generalize to other settings. 
        \item It is fine to include aspirational goals as motivation as long as it is clear that these goals are not attained by the paper. 
    \end{itemize}

\item {\bf Limitations}
    \item[] Question: Does the paper discuss the limitations of the work performed by the authors?
    \item[] Answer: \answerYes{} 
    \item[] Justification: {We discuss the limitations of this work in the last paragraph of Section 4.}
    \item[] Guidelines:
    \begin{itemize}
        \item The answer NA means that the paper has no limitation while the answer No means that the paper has limitations, but those are not discussed in the paper. 
        \item The authors are encouraged to create a separate "Limitations" section in their paper.
        \item The paper should point out any strong assumptions and how robust the results are to violations of these assumptions (e.g., independence assumptions, noiseless settings, model well-specification, asymptotic approximations only holding locally). The authors should reflect on how these assumptions might be violated in practice and what the implications would be.
        \item The authors should reflect on the scope of the claims made, e.g., if the approach was only tested on a few datasets or with a few runs. In general, empirical results often depend on implicit assumptions, which should be articulated.
        \item The authors should reflect on the factors that influence the performance of the approach. For example, a facial recognition algorithm may perform poorly when image resolution is low or images are taken in low lighting. Or a speech-to-text system might not be used reliably to provide closed captions for online lectures because it fails to handle technical jargon.
        \item The authors should discuss the computational efficiency of the proposed algorithms and how they scale with dataset size.
        \item If applicable, the authors should discuss possible limitations of their approach to address problems of privacy and fairness.
        \item While the authors might fear that complete honesty about limitations might be used by reviewers as grounds for rejection, a worse outcome might be that reviewers discover limitations that aren't acknowledged in the paper. The authors should use their best judgment and recognize that individual actions in favor of transparency play an important role in developing norms that preserve the integrity of the community. Reviewers will be specifically instructed to not penalize honesty concerning limitations.
    \end{itemize}

\item {\bf Theory assumptions and proofs}
    \item[] Question: For each theoretical result, does the paper provide the full set of assumptions and a complete (and correct) proof?
    \item[] Answer: \answerYes{} 
    \item[] Justification: Detailed proofs of the propositions and theorems presented in this paper are provided in Appendix H.
    \item[] Guidelines:
    \begin{itemize}
        \item The answer NA means that the paper does not include theoretical results. 
        \item All the theorems, formulas, and proofs in the paper should be numbered and cross-referenced.
        \item All assumptions should be clearly stated or referenced in the statement of any theorems.
        \item The proofs can either appear in the main paper or the supplemental material, but if they appear in the supplemental material, the authors are encouraged to provide a short proof sketch to provide intuition. 
        \item Inversely, any informal proof provided in the core of the paper should be complemented by formal proofs provided in appendix or supplemental material.
        \item Theorems and Lemmas that the proof relies upon should be properly referenced. 
    \end{itemize}

    \item {\bf Experimental result reproducibility}
    \item[] Question: Does the paper fully disclose all the information needed to reproduce the main experimental results of the paper to the extent that it affects the main claims and/or conclusions of the paper (regardless of whether the code and data are provided or not)?
    \item[] Answer: \answerYes{} 
    \item[] Justification: We describe the details of the experiments in the experimental section and the Appendix. We plan to make the code publicly available to ease the reproducibility.
    \item[] Guidelines:
    \begin{itemize}
        \item The answer NA means that the paper does not include experiments.
        \item If the paper includes experiments, a No answer to this question will not be perceived well by the reviewers: Making the paper reproducible is important, regardless of whether the code and data are provided or not.
        \item If the contribution is a dataset and/or model, the authors should describe the steps taken to make their results reproducible or verifiable. 
        \item Depending on the contribution, reproducibility can be accomplished in various ways. For example, if the contribution is a novel architecture, describing the architecture fully might suffice, or if the contribution is a specific model and empirical evaluation, it may be necessary to either make it possible for others to replicate the model with the same dataset, or provide access to the model. In general. releasing code and data is often one good way to accomplish this, but reproducibility can also be provided via detailed instructions for how to replicate the results, access to a hosted model (e.g., in the case of a large language model), releasing of a model checkpoint, or other means that are appropriate to the research performed.
        \item While NeurIPS does not require releasing code, the conference does require all submissions to provide some reasonable avenue for reproducibility, which may depend on the nature of the contribution. For example
        \begin{enumerate}
            \item If the contribution is primarily a new algorithm, the paper should make it clear how to reproduce that algorithm.
            \item If the contribution is primarily a new model architecture, the paper should describe the architecture clearly and fully.
            \item If the contribution is a new model (e.g., a large language model), then there should either be a way to access this model for reproducing the results or a way to reproduce the model (e.g., with an open-source dataset or instructions for how to construct the dataset).
            \item We recognize that reproducibility may be tricky in some cases, in which case authors are welcome to describe the particular way they provide for reproducibility. In the case of closed-source models, it may be that access to the model is limited in some way (e.g., to registered users), but it should be possible for other researchers to have some path to reproducing or verifying the results.
        \end{enumerate}
    \end{itemize}

\item {\bf Open access to data and code}
    \item[] Question: Does the paper provide open access to the data and code, with sufficient instructions to faithfully reproduce the main experimental results, as described in supplemental material?
    \item[] Answer: \answerYes{} 
    \item[] Justification: All experimental datasets used are publicly available. The pseudocode for the AIM module is provided in Appendix F.
    \item[] Guidelines:
    \begin{itemize}
        \item The answer NA means that paper does not include experiments requiring code.
        \item Please see the NeurIPS code and data submission guidelines (\url{https://nips.cc/public/guides/CodeSubmissionPolicy}) for more details.
        \item While we encourage the release of code and data, we understand that this might not be possible, so “No” is an acceptable answer. Papers cannot be rejected simply for not including code, unless this is central to the contribution (e.g., for a new open-source benchmark).
        \item The instructions should contain the exact command and environment needed to run to reproduce the results. See the NeurIPS code and data submission guidelines (\url{https://nips.cc/public/guides/CodeSubmissionPolicy}) for more details.
        \item The authors should provide instructions on data access and preparation, including how to access the raw data, preprocessed data, intermediate data, and generated data, etc.
        \item The authors should provide scripts to reproduce all experimental results for the new proposed method and baselines. If only a subset of experiments are reproducible, they should state which ones are omitted from the script and why.
        \item At submission time, to preserve anonymity, the authors should release anonymized versions (if applicable).
        \item Providing as much information as possible in supplemental material (appended to the paper) is recommended, but including URLs to data and code is permitted.
    \end{itemize}

\item {\bf Experimental setting/details}
    \item[] Question: Does the paper specify all the training and test details (e.g., data splits, hyperparameters, how they were chosen, type of optimizer, etc.) necessary to understand the results?
    \item[] Answer: \answerYes{} 
    \item[] Justification: We provide detailed experimental settings, including data split, optimizer configurations, and other parameters, in Section 4 (Experimental Section) of the main paper. Additional implementation details are provided in Appendix A.

    \item[] Guidelines:
    \begin{itemize}
        \item The answer NA means that the paper does not include experiments.
        \item The experimental setting should be presented in the core of the paper to a level of detail that is necessary to appreciate the results and make sense of them.
        \item The full details can be provided either with the code, in appendix, or as supplemental material.
    \end{itemize}

\item {\bf Experiment statistical significance}
    \item[] Question: Does the paper report error bars suitably and correctly defined or other appropriate information about the statistical significance of the experiments?
    \item[] Answer: \answerNo{} 
    \item[] Justification: Given the substantial computational resources and time required, a statistical analysis is not conducted.
    \item[] Guidelines:
    \begin{itemize}
        \item The answer NA means that the paper does not include experiments.
        \item The authors should answer "Yes" if the results are accompanied by error bars, confidence intervals, or statistical significance tests, at least for the experiments that support the main claims of the paper.
        \item The factors of variability that the error bars are capturing should be clearly stated (for example, train/test split, initialization, random drawing of some parameter, or overall run with given experimental conditions).
        \item The method for calculating the error bars should be explained (closed form formula, call to a library function, bootstrap, etc.)
        \item The assumptions made should be given (e.g., Normally distributed errors).
        \item It should be clear whether the error bar is the standard deviation or the standard error of the mean.
        \item It is OK to report 1-sigma error bars, but one should state it. The authors should preferably report a 2-sigma error bar than state that they have a 96\% CI, if the hypothesis of Normality of errors is not verified.
        \item For asymmetric distributions, the authors should be careful not to show in tables or figures symmetric error bars that would yield results that are out of range (e.g. negative error rates).
        \item If error bars are reported in tables or plots, The authors should explain in the text how they were calculated and reference the corresponding figures or tables in the text.
    \end{itemize}

\item {\bf Experiments compute resources}
    \item[] Question: For each experiment, does the paper provide sufficient information on the computer resources (type of compute workers, memory, time of execution) needed to reproduce the experiments?
    \item[] Answer: \answerYes{} 
    \item[] Justification: We describe the software and hardware platforms for the experiments in the Experiments section, with specific resource consumption consistent with regular deep model training.
    \item[] Guidelines:
    \begin{itemize}
        \item The answer NA means that the paper does not include experiments.
        \item The paper should indicate the type of compute workers CPU or GPU, internal cluster, or cloud provider, including relevant memory and storage.
        \item The paper should provide the amount of compute required for each of the individual experimental runs as well as estimate the total compute. 
        \item The paper should disclose whether the full research project required more compute than the experiments reported in the paper (e.g., preliminary or failed experiments that didn't make it into the paper). 
    \end{itemize}
    
\item {\bf Code of ethics}
    \item[] Question: Does the research conducted in the paper conform, in every respect, with the NeurIPS Code of Ethics \url{https://neurips.cc/public/EthicsGuidelines}?
    \item[] Answer: \answerYes{} 
    \item[] Justification: We have made sure that our paper conforms with the NeurIPS Code of Ethics
    \item[] Guidelines:
    \begin{itemize}
        \item The answer NA means that the authors have not reviewed the NeurIPS Code of Ethics.
        \item If the authors answer No, they should explain the special circumstances that require a deviation from the Code of Ethics.
        \item The authors should make sure to preserve anonymity (e.g., if there is a special consideration due to laws or regulations in their jurisdiction).
    \end{itemize}

\item {\bf Broader impacts}
    \item[] Question: Does the paper discuss both potential positive societal impacts and negative societal impacts of the work performed?
    \item[] Answer: \answerYes{} 
    \item[] Justification: We discuss the positive impacts of our approach in the Introduction and Experimental sections.
    \item[] Guidelines:
    \begin{itemize}
        \item The answer NA means that there is no societal impact of the work performed.
        \item If the authors answer NA or No, they should explain why their work has no societal impact or why the paper does not address societal impact.
        \item Examples of negative societal impacts include potential malicious or unintended uses (e.g., disinformation, generating fake profiles, surveillance), fairness considerations (e.g., deployment of technologies that could make decisions that unfairly impact specific groups), privacy considerations, and security considerations.
        \item The conference expects that many papers will be foundational research and not tied to particular applications, let alone deployments. However, if there is a direct path to any negative applications, the authors should point it out. For example, it is legitimate to point out that an improvement in the quality of generative models could be used to generate deepfakes for disinformation. On the other hand, it is not needed to point out that a generic algorithm for optimizing neural networks could enable people to train models that generate Deepfakes faster.
        \item The authors should consider possible harms that could arise when the technology is being used as intended and functioning correctly, harms that could arise when the technology is being used as intended but gives incorrect results, and harms following from (intentional or unintentional) misuse of the technology.
        \item If there are negative societal impacts, the authors could also discuss possible mitigation strategies (e.g., gated release of models, providing defenses in addition to attacks, mechanisms for monitoring misuse, mechanisms to monitor how a system learns from feedback over time, improving the efficiency and accessibility of ML).
    \end{itemize}
    
\item {\bf Safeguards}
    \item[] Question: Does the paper describe safeguards that have been put in place for responsible release of data or models that have a high risk for misuse (e.g., pretrained language models, image generators, or scraped datasets)?
    \item[] Answer: \answerNA{} 
    \item[] Justification: There is no risk of misuse of the proposed method and the datasets used in the paper.
    \item[] Guidelines:
    \begin{itemize}
        \item The answer NA means that the paper poses no such risks.
        \item Released models that have a high risk for misuse or dual-use should be released with necessary safeguards to allow for controlled use of the model, for example by requiring that users adhere to usage guidelines or restrictions to access the model or implementing safety filters. 
        \item Datasets that have been scraped from the Internet could pose safety risks. The authors should describe how they avoided releasing unsafe images.
        \item We recognize that providing effective safeguards is challenging, and many papers do not require this, but we encourage authors to take this into account and make a best faith effort.
    \end{itemize}

\item {\bf Licenses for existing assets}
    \item[] Question: Are the creators or original owners of assets (e.g., code, data, models), used in the paper, properly credited and are the license and terms of use explicitly mentioned and properly respected?
    \item[] Answer: \answerYes{} 
    \item[] Justification: We have cited the original paper or attached the link to the existing assets used in this paper.
    \item[] Guidelines:
    \begin{itemize}
        \item The answer NA means that the paper does not use existing assets.
        \item The authors should cite the original paper that produced the code package or dataset.
        \item The authors should state which version of the asset is used and, if possible, include a URL.
        \item The name of the license (e.g., CC-BY 4.0) should be included for each asset.
        \item For scraped data from a particular source (e.g., website), the copyright and terms of service of that source should be provided.
        \item If assets are released, the license, copyright information, and terms of use in the package should be provided. For popular datasets, \url{paperswithcode.com/datasets} has curated licenses for some datasets. Their licensing guide can help determine the license of a dataset.
        \item For existing datasets that are re-packaged, both the original license and the license of the derived asset (if it has changed) should be provided.
        \item If this information is not available online, the authors are encouraged to reach out to the asset's creators.
    \end{itemize}

\item {\bf New assets}
    \item[] Question: Are new assets introduced in the paper well documented and is the documentation provided alongside the assets?
    \item[] Answer: \answerNA{} 
    \item[] Justification: We do not release new assets.
    \item[] Guidelines:
    \begin{itemize}
        \item The answer NA means that the paper does not release new assets.
        \item Researchers should communicate the details of the dataset/code/model as part of their submissions via structured templates. This includes details about training, license, limitations, etc. 
        \item The paper should discuss whether and how consent was obtained from people whose asset is used.
        \item At submission time, remember to anonymize your assets (if applicable). You can either create an anonymized URL or include an anonymized zip file.
    \end{itemize}

\item {\bf Crowdsourcing and research with human subjects}
    \item[] Question: For crowdsourcing experiments and research with human subjects, does the paper include the full text of instructions given to participants and screenshots, if applicable, as well as details about compensation (if any)? 
    \item[] Answer: \answerNA{} 
    \item[] Justification:  This paper does not involve crowdsourcing or research with human subjects.
    \item[] Guidelines:
    \begin{itemize}
        \item The answer NA means that the paper does not involve crowdsourcing nor research with human subjects.
        \item Including this information in the supplemental material is fine, but if the main contribution of the paper involves human subjects, then as much detail as possible should be included in the main paper. 
        \item According to the NeurIPS Code of Ethics, workers involved in data collection, curation, or other labor should be paid at least the minimum wage in the country of the data collector. 
    \end{itemize}

\item {\bf Institutional review board (IRB) approvals or equivalent for research with human subjects}
    \item[] Question: Does the paper describe potential risks incurred by study participants, whether such risks were disclosed to the subjects, and whether Institutional Review Board (IRB) approvals (or an equivalent approval/review based on the requirements of your country or institution) were obtained?
    \item[] Answer: \answerNA{} 
    \item[] Justification:  This paper does not involve crowdsourcing or research with human subjects.
    \item[] Guidelines:
    \begin{itemize}
        \item The answer NA means that the paper does not involve crowdsourcing nor research with human subjects.
        \item Depending on the country in which research is conducted, IRB approval (or equivalent) may be required for any human subjects research. If you obtained IRB approval, you should clearly state this in the paper. 
        \item We recognize that the procedures for this may vary significantly between institutions and locations, and we expect authors to adhere to the NeurIPS Code of Ethics and the guidelines for their institution. 
        \item For initial submissions, do not include any information that would break anonymity (if applicable), such as the institution conducting the review.
    \end{itemize}

\item {\bf Declaration of LLM usage}
    \item[] Question: Does the paper describe the usage of LLMs if it is an important, original, or non-standard component of the core methods in this research? Note that if the LLM is used only for writing, editing, or formatting purposes and does not impact the core methodology, scientific rigorousness, or originality of the research, declaration is not required.
    \item[] Answer: \answerNA{} 
    \item[] Justification: The core method development in this research does not involve LLMs as any important, original, or non-standard components.
    \item[] Guidelines:
    \begin{itemize}
        \item The answer NA means that the core method development in this research does not involve LLMs as any important, original, or non-standard components.
        \item Please refer to our LLM policy (\url{https://neurips.cc/Conferences/2025/LLM}) for what should or should not be described.
    \end{itemize}

\end{enumerate}

\end{document}